\DeclareSymbolFontAlphabet{\mathbb}{AMSb}
\DeclareSymbolFontAlphabet{\mathbbl}{bbold}
\title{Orientability of Causal Relations in Time Series using Summary Causal Graphs and Faithful Distributions}
\author[1]{Timothée Loranchet}
\author[1]{Charles K. Assaad}
\affil[1]{Sorbonne Université, INSERM, Institut Pierre Louis d’Epidémiologie et de Santé Publique, F75012, Paris, France}   
\date{}
\begin{document}
\maketitle

% \documentclass{article}
% \usepackage{arxiv}

% \usepackage{natbib} % has a nice set of citation styles and commands
%     % \bibliographystyle{plainnat}
%     % \renewcommand{\bibsection}{\subsubsection*{References}}

% \usepackage{ciphod}
% \usepackage{mathbbol}
% \usepackage{subcaption}
% \usepackage{natbib}

% \DeclareSymbolFontAlphabet{\mathbb}{AMSb}
% \DeclareSymbolFontAlphabet{\mathbbl}{bbold}

% \title{Orientability of Causal Relations in Time Series using Summary Causal Graphs and Faithful Distributions}

% \author{Timothée Loranchet and Charles K. Assaad}

% \date{}

% \begin{document}
% \maketitle

\begin{abstract}
    Understanding causal relations between temporal variables is a central challenge in time series analysis, particularly when the full causal structure is unknown. 
    Even when the full causal structure cannot be fully specified, experts often succeed in providing a high-level abstraction of the causal graph, known as a summary causal graph, which captures the main causal relations between different time series while abstracting away micro-level details.
    In this work, we present conditions that guarantee the orientability of micro-level edges between temporal variables given the background knowledge encoded in a summary causal graph and assuming having access to a faithful and causally sufficient distribution with respect to the true unknown graph.
    Our results provide theoretical guarantees for edge orientation at the micro-level, even in the presence of cycles or bidirected edges at the macro-level. These findings offer practical guidance for leveraging SCGs to inform causal discovery in complex temporal systems and highlight the value of incorporating expert knowledge to improve causal inference from observational time series data.
\end{abstract}

\section{Introduction}
Time series data arise in a wide range of domains, including epidemiology, intensive care units (ICU) monitoring, econometrics, neuroscience, and information technology systems. In these settings, understanding the underlying causal structure is crucial: it enables predictions under interventions, guides decision-making, and supports robust explanations of observed dynamics. For example, determining whether an observed association between two physiological signals in the ICU reflects a genuine causal influence or merely a common response to another process can directly affect patient management. Similarly, in epidemiology, orienting the causal relations between incidence rates, and behavioral factors is key to forecasting and policy design.

%Causal discovery and causal effect identification are two central challenges in modern causal inference. 
Directed acyclic graphs (DAGs) provide a powerful framework~\citep{Pearl_2000} for representing causal relations. In the context of time-indexed systems~\citep{Peters_2013,Runge_2019,Runge_2020}, such graphs are commonly referred to as full-time DAGs (FT-DAGs), where each node corresponds to a time-indexed variable. In many applications, however, an FT-DAG cannot be specified directly from background knowledge~\citep{Ait_Bachir_2023}. As a result, researchers often rely either on abstractions of the underlying FT-DAG, such as summary causal graphs (SCGs)~\citep{Assaad_2023,Assaad_2024,Ferreira_2024} where each node corresponds to an entire time series,  or on causal discovery algorithms that recover partially oriented structures such as full time completed partially directed acyclic graphs (FT-CPDAGs)~\citep{Spirtes_2000,Chickering_2002}. Each of these two representations (SCG and FT-CPDAG) provides partial but complementary information about the underlying causal structure.

Since both SCGs and FT-CPDAGs contain less information than the true FT-DAG, it is natural to seek a representation that integrates their strengths. A natural candidate is the maximally oriented partially directed acyclic graph (FT-MPDAG)~\citep{Perkovic_2020}, which retains all orientations guaranteed either by discovery algorithms or by the background knowledge encoded in the SCG, thereby bringing the representation as close as possible to the FT-DAG.
However, relying on causal discovery algorithms comes at a cost: they are computationally demanding and require strong assumptions, such as faithfulness and causal sufficiency. This motivates a fundamental question: can we determine in advance\textemdash before running the causal discovery algorithm\textemdash whether a specific edge is guaranteed to be oriented, given the SCG and the available background knowledge? Answering this question allows researchers to focus discovery efforts where they are most informative.

This paper investigates the conditions under which causal relations in time series, particularly instantaneous ones, can be oriented  by combining information from SCGs with independence constraints extracted from data. Furthermore it discusses the implications of these results on causal reasoning and causal discovery.

The remainder of the paper is organized as follows. Section~\ref{sec:background} introduces the necessary background and related concepts. Section~\ref{sec:main} presents our main theoretical contributions. Section~\ref{sec:implications} presents the direct implications these results on causal reasoning. Finally, Section~\ref{sec:discussion} concludes with a discussion and outlines directions for future work.

\section{Background}
\label{sec:background}
We consider multivariate time series $\mathbb{V}$ evolving according to an \emph{unknown} discrete-time dynamic structural causal model (DT-DSCM)~\citep{Peters_2013,Runge_2020,Gerhardus_2020,Ferreira_2025}, observed from an initial time $t_0$ up to a final time $t_{\max}$, \ie, $\forall Y_t \in \mathbb{V}$,
%\begin{equation*}
$
    Y_t := f^{y_t}(\mathbb{X}, L^{y_t}_t),
$
%\end{equation*}
where $\mathbb{X}$ represents the observed direct causes of $Y_t$ in $\mathbb{V}$ and $L^{y_t}_t$ a latent noise term for $Y_t$.
The DT-DSCM governs the temporal and causal dependencies among the variables in $\mathbb{V}$. 
It respects the temporal priority between causes and effects, ensuring that a variable $X_t \in \mathbb{V}$ cannot influence a variable $Y_{t-\ell}$ with $\ell >0$, while allowing for instantaneous relations, so that $X_t$ may directly affect $Y_t$.
% , \ie, for all variable in a time series in $\mathbb{V}$
% \begin{equation}
% Y_t := f(\parents{Y_t}{\ADMG}, l_t^{y_t})
% \end{equation}  
The DT-DSCM induces a full time\footnote{The FT-DAG is a DAG; we add the prefix "FT" to emphasize that it represents the full-time structure of the dynamic causal model, capturing all temporal and instantaneous dependencies across the observed time series.} directed acyclic graph (FT-DAG)  $\ADMG=(\vADMG, \eADMG)$ with vertices $\vADMG$ and edges $\eADMG$.
%$l_t^{y_t}$ represents the hidden noise causing $Y_t$.
In $\ADMG$, a parent of $Y_t \in \vADMG$ is any $X_{t'} \in \vADMG$ \st $X_{t'} \to Y_t$ is in $\eADMG$. The set of neighbors of $Y_t$ is denoted $Ne(Y_t,\ADMG)$ and the set of parents of $Y_t$ is denoted $Pa(Y_t,\ADMG)$.
%The  descendants of $Y_t$, denoted as $De(Y_t,\ADMG)$, are those reachable by a directed path from $Y_t$.
%A non-descendant of $Y_t$, denoted $Nd(Y_t,\ADMG)$, is any node that is not in $De(Y_t,\ADMG)$.
A node $Y_t$ in $\ADMG$ is considered a collider on a path $p$ if there exists a subpath $Z_{t''} \rightarrow X_{t'} \leftarrow Y_t$ within $p$. In this context, we will interchangeably refer to the triple $Z_{t''} \rightarrow X_{t'} \leftarrow Y_t$ and the node $X_{t'}$ as the collider. Furthermore, $Z_{t''} \rightarrow X_{t'} \leftarrow Y_t$ is termed an unshielded collider if $Z_{t''}$ and $Y_t$ are not adjacent.

The DT-DSCM also induces a distribution compatible with $\ADMG$ according to the causal Markov condition~\citep{Spirtes_2000}.  
% \begin{definition}[Full-Time DAG (FT-DAG)]
% \label{def:DAG}
%     $\ADMG=(\vADMG, \eADMG)$
% \end{definition}

\begin{definition}[Causal Markov condition, \cite{Spirtes_2000}]
    Consider an FT-DAG $\ADMG$ compatible with a distribution $P$. For any variable $Y_t\in \vADMG$, $Y_t$ is independent in $P$ of its non-descendant conditional on its parents in $\ADMG$.
%    $\indepc{Y_t}{Nd(Y_t,\ADMG)}{\parents{Y_t}{\ADMG}}{P}.$
\end{definition}

In many applications, the FT-DAG cannot be fully specified from background knowledge~\citep{Assaad_2023,Ait_Bachir_2023}. Therefore, researchers in different fields focus on specifying based on background knowledge a summary causal graph, which is a summary of the FT-DAG where each vertex represent a time series.

\begin{definition}[Summary Causal Graph (SCG), \cite{Assaad_2024}]
	\label{def:SCG}
	Consider an FT-DAG $\ADMG = (\vADMG, \eADMG)$. The \emph{summary causal graph (SCG)} $\SCG = (\vSCG, \eSCG)$ compatible with $\ADMG$ is defined in the following way:
    \begin{itemize}
        \item 					$\mathbb{S} := \{S_\mathbb{Y} | \forall S_\mathbb{Y}=\{Y_{t_0}, \cdots, Y_{t_{max}}\} \in \mathbb{V}\}$,
        \item 					$\mathbb{E}^{\mathbbl{s}} := \{S_\mathbb{X}\rightarrow S_\mathbb{Y} | \forall S_\mathbb{X},S_\mathbb{Y} \in \mathbb{S},~\exists t'\leq t\in [t_0,t_{max}] \text{ \st } X_{t'}\rightarrow Y_{t}\in\mathbb{E}\}.$
\end{itemize}
\end{definition}

All graphical notions introduced for FT-DAGs, such as kinship and paths, are also applicable to SCGs. Then, as in an FT-DAG, a node $S_\mathbb{Z}$ is a collider between $S_\mathbb{X}$ and $S_\mathbb{Y}$ if it is their common child.  
Since the SCG can contains bidirected edges, we emphasize that configurations such as $S_\mathbb{X}\rightleftarrows S_\mathbb{Z}\rightleftarrows S_\mathbb{Y}$ or  
$S_\mathbb{X}\to S_\mathbb{Z}\rightleftarrows S_\mathbb{Y}$ are colliders. %A collider is \emph{unshielded} when $S_\mathbb{X}$ and $S_\mathbb{Y}$ are non-adjacent.
However, unlike FT-DAGs, SCGs can contain cycles of various sizes. 
We denote the special case of cycles which involve only one vertex in the SCG as \emph{self-loops}. 
\begin{definition}[Self-loop]
    Let $\SCG = (\vSCG, \eSCG)$ be an SCG and $S_\mathbb X\in \vSCG$ a variable. We say that $\SCG$ contains a \emph{self-loop} on $S_\mathbb X$ if $(S_\mathbb X,S_\mathbb X)\in \eSCG$.
\end{definition}

Many different FT-DAGs can be compatible with the same SCG. Examples of an SCG together with two of its compatible FT-DAGs are shown in Figure~\ref{fig:scg_ftdag}. 
We denote the set of all FT-DAGs compatible with a given SCG $\SCG$ as $\compatible{\SCG}$.

\begin{figure}[t!]
    \centering
    \begin{subfigure}[b]{0.12\textwidth}
        \centering
        \begin{tikzpicture}[{black, circle, draw, inner sep=0}]
            \tikzset{nodes={draw,rounded corners},minimum height=0.7cm,minimum width=0.7cm}
            \node (X) at (0,0) {$S_\mathbb X$};
            \node (Y) at (1.2,0) {$S_\mathbb Y$};
            \node (Z) at (0.6,-1) {$S_\mathbb Z$};
            \draw[->,>=latex] (Y) edge[bend left=15] (X);
            \draw[->,>=latex] (X) edge[bend left=15] (Y);
            \draw[->,>=latex] (Z) edge (Y);
            \draw[->,>=latex] (Z) edge[bend left=15] (X);
            \draw[->,>=latex] (X) edge[bend left=15] (Z);
            \draw[->,>=latex] (X) edge[loop above, looseness=1, min distance=5mm] (X);
            \draw[->,>=latex] (Z) edge[loop below, looseness=1, min distance=5mm] (Z);
        \end{tikzpicture}
        \caption{SCG $\SCG$.}
        \label{fig:lemma3scg}
    \end{subfigure}
    \hfill
    \begin{subfigure}[b]{0.16\textwidth}
        \centering
        \begin{tikzpicture}[{black, circle, draw, inner sep=0}]
            \tikzset{nodes={draw,rounded corners},minimum height=0.7cm,minimum width=0.7cm}

            \node[draw=none] at (-0.6,0) {...};
            \node[draw=none] at (1.6,0) {...};
        
            \node[draw=none] at (-0.6,-1) {...};
            \node[draw=none] at (1.6,-1) {...};
        
            \node[draw=none] at (-0.6,-2) {...};
            \node[draw=none] at (1.6,-2) {...};
            
            \node (X-1) at (0,0) {$X_{t-1}$};
            \node (X) at (1,0) {$X_{t}$};
            \node (Y-1) at (0,-1) {$Y_{t-1}$};
            \node (Y) at (1,-1) {$Y_{t}$};
            \node (Z-1) at (0,-2) {$Z_{t-1}$};
            \node (Z) at (1,-2) {$Z_{t}$};

            \draw[->,>=latex] (X-1) -- (X);
            \draw[->,>=latex] (X-1) -- (Z);
            \draw[->,>=latex] (Z-1) -- (Z);
            \draw[->,>=latex] (X-1) -- (Y);
            \draw[->,>=latex] (Y) -- (X);
            \draw[->,>=latex] (Y-1) -- (X-1);
            \draw[->,>=latex, color = blue] (X) to[bend left=40] (Z);
            \draw[->,>=latex, color = blue] (X-1) to[bend right=40] (Z-1);
            \draw[->,>=latex] (Z-1) -- (X);
            \draw[->,>=latex] (Z-1) -- (Y);
            
        \end{tikzpicture}
        \caption{FT-DAG $\ADMG_1$.}
        \label{fig:lemma3ftmpdag1}
    \end{subfigure}
    \hfill
    \begin{subfigure}[b]{0.16\textwidth}
        \centering
                \begin{tikzpicture}[{black, circle, draw, inner sep=0}]
            \tikzset{nodes={draw,rounded corners},minimum height=0.7cm,minimum width=0.7cm}
            
            \node[draw=none] at (-0.6,0) {...};
            \node[draw=none] at (1.6,0) {...};

            \node[draw=none] at (-0.6,-1) {...};
            \node[draw=none] at (1.6,-1) {...};
        
            \node[draw=none] at (-0.6,-2) {...};
            \node[draw=none] at (1.6,-2) {...};
    
            \node (X-1) at (0,0) {$X_{t-1}$};
            \node (X) at (1,0) {$X_{t}$};
            \node (Y-1) at (0,-1) {$Y_{t-1}$};
            \node (Y) at (1,-1) {$Y_{t}$};
            \node (Z-1) at (0,-2) {$Z_{t-1}$};
            \node (Z) at (1,-2) {$Z_{t}$};

            \draw[->,>=latex] (X-1) -- (X);
            \draw[->,>=latex] (Z-1) -- (Z);
            \draw[->,>=latex] (X-1) -- (Z);
            \draw[->,>=latex] (X-1) -- (Y);
            \draw[->,>=latex] (Y) -- (X);
            \draw[->,>=latex] (Y-1) -- (X-1);
            \draw[<-,>=latex, color = blue] (X) to[bend left=40] (Z);
            \draw[<-,>=latex, color = blue] (X-1) to[bend right=40] (Z-1);
            \draw[->,>=latex] (Z-1) -- (X);
            \draw[->,>=latex] (Z-1) -- (Y);
            
        \end{tikzpicture}
        \caption{FT-DAG $\ADMG_2$.}
        \label{fig:lemma3ftmpdag2}
    \end{subfigure}
    \caption{An SCG $\SCG$ and two FT-DAGs $\ADMG_1,\ADMG_2\in \compatible{\SCG}$: The blue edge is oriented differently in the two FT-DAGs.}
    \label{fig:scg_ftdag}
\end{figure}

SCGs can be very useful in many applications but they contain less information than the FT-DAG therefore it is not surprising that the set of causal effects identifiable from an SCG or from an FT-CPDAG is generally smaller than the set identifiable from FT-DAG~\citep{Assaad_2024,Ferreira_2024}.
This difficulty in SCG arises because of cycles.

Therefore, in contexts where it needed to acquire more information than the ones  provided by the SCG, \eg we would like to know if $X_t\rightarrow Y_t$, it can be useful to apply causal discovery algorithms to obtain a representation closer to the underlying DAG.
However, using causal discovery algorithms comes with a cost.  Causal discovery algorithms can be computationally expensive, and they require additional assumptions. In this paper, we focus on algorithms that require the following three assumptions. 
\begin{assumption}[Causal sufficiency, \cite{Spirtes_2000}]
\label{assumption:causal_sufficiency}
    All noise terms in the DT-DSCM are mutually independent and each noise term can affect only one observed variable.
\end{assumption}
\begin{assumption}[Faithfulness, \cite{Spirtes_2000}]
\label{assumption:faithfulness}
    All conditional independencies in the distribution are entails by the causal Markov condition.
\end{assumption}
\begin{assumption}[Stationarity]
\label{assumption:stationarity}
For any two temporal variables $Y_t$ and $Y_{t'}$ belonging to the same time series in a DT-DSCM, we have $f^{y_t} = f^{y_{t'}}$. In contrast, this equality does not necessarily hold for temporal variables $Y_t$ and $X_{t'}$ coming from two different time series in the DT-DSCM.
\end{assumption}

Assumption~\ref{assumption:stationarity} implies that, in an FT-DAG induced by a stationary DT-DSCM, if an edge $X_{t'} \to Y_t$ exists, then all edges $X_{t'-\ell} \to Y_{t-\ell}$ also exist for every $\ell \geq 0$ (\st $t'-\ell$ and $t-\ell$ remain within the interval $[t_0, t_{\max}]$). Consequently, FT-DAGs can be represented by focusing on a time window ranging from the variables at time $t$ to the variables at time $t-\gamma_{\max}$, where $\gamma_{\max}$ denotes the maximal lag for which a causal link exists between a variable at that lag and $t$. In the following examples, we set $\gamma_{\max}=1$ to simplify the FT-DAGs, although larger values are possible. To emphasize that we only depict a fragment of the FT-DAG rather than the complete graph, we always add ``\dots'' to indicate that, due to stationarity, the FT-DAG extends both into the past and the future beyond the displayed window of the FT-DAG. 
It is possible to relax the stationarity assumption by using independent ensembles of time series realizations (i.e., multiple multivariate time series). However, in this paper, we mainly focus on the case where only a single realization of the multivariate time series is available, which makes Assumption~\ref{assumption:stationarity} necessary. A weaker form of stationarity that accommodates multiple realizations is discussed in Section~\ref{sec:discussion}. 

Under Assumptions~\ref{assumption:causal_sufficiency}, \ref{assumption:faithfulness}, and \ref{assumption:stationarity}, the FT-DAG $\ADMG$ cannot be recovered; instead, one can guarantee identification of its FT-CPDAG~\citep{Chickering_2002}, which represents the Markov equivalence class of all FT-DAGs, denoted as $MEC(\ADMG)$, sharing the same set of conditional independencies~\citep{Verma_1990}. All DAGs in $MEC(\ADMG)$ are characterized by the same skeleton (i.e., the same unoriented graph over the variables) and the same unshielded colliders~\citep{Verma_1990}. When additional background knowledge is available, it becomes possible to orient more edges beyond those oriented in the FT-CPDAG. The resulting graph, which augments the FT-CPDAG with these orientations, is called a maximally oriented partially directed acyclic graph (MPDAG)~\citep{Perkovic_2020}. In this paper, the only source of background knowledge we assume is the information encoded in the summary causal graph (SCG). Accordingly, we define the FT-MPDAG\footnote{While the cited literature refers to CPDAGs and MPDAGs constructed from DAGs, we here use FT-CPDAGs and FT-MPDAGs to emphasize that they are built from FT-DAGs and thus encode temporal structure; this does not change their definitions or any results about them.} in terms of SCGs in the following.

% \begin{definition}[FT-CPDAG]
% 	\label{def:CPDAG}
%     Consider a DAG $\ADMG=(\vADMG, \eADMG)$ and its Markov equivalence class $MEC(\ADMG)$.
%     A completely partially acyclic directed graph $\CPDAG=(\vCPDAG, \eCPDAG)$ has the same vertices as the original FT-DAG, with edges  defined as follows:
% \begin{equation*}
%     \begin{aligned}
%     \mathbb{E}^{\rightarrow} &:= \{X\rightarrow Y \mid \forall X,Y \in \mathbb{V} \text{ \st } \forall \mathcal{G}^i \in MEC(\ADMG),  X\rightarrow Y \in \ADMG^i  \}\\
%     \mathbb{E}^{-} &:= \{X- Y \mid \forall X,Y \in \mathbb{V} \text{ \st } \exists \mathcal{G}^i \in MEC(\ADMG),  X\rightarrow Y \in \ADMG^i  \text{ and } \exists \mathcal{G}^j \in MEC(\ADMG),  Y\rightarrow X \in \ADMG^j\}
%     \end{aligned}
% \end{equation*}
% where $\eCPDAG=\mathbb{E}^{\rightarrow}\cup \mathbb{E}^{-}$.
% \end{definition}

\begin{definition}[FT-MPDAG using SCG]
\label{def:MPDAG}
Consider an SCG $\SCG$ and Markov equivalence class $MEC(\ADMG)$.
A full-time maximally oriented partially acyclic directed graph (FT-MPDAG) $\MPDAG=(\vMPDAG, \eMPDAG)$ defined using $\SCG$ and $MEC(\ADMG)$ is a graph with the same vertices as the original FT-DAG, with edges  defined as follows:
\begin{itemize}
    \item   $\mathbb{E}^{\rightarrow} 
    := \{\, X_{t'} \to Y_t \;\mid\; 
        \forall X_{t'},Y_t \in \mathbb{V}, \text{ if }$
        \begin{itemize}
            \item $\forall \mathcal{G}^i \in MEC(\ADMG),  X_{t'}\rightarrow Y_t \in \ADMG^i \text{ or }$
            \item   both the following holds:
            \begin{itemize}
                \item $\exists \mathcal{G}^i, \mathcal{G}^j \in MEC(\ADMG), \text{ \st } X\rightarrow Y \in \ADMG^i  \text{ and } Y\rightarrow X \in \ADMG^j$, \text{ and}
                \item $S_{\mathbb{X}} \rightarrow S_{\mathbb{Y}} \in \SCG \text{ and } S_{\mathbb{Y}} \rightarrow S_{\mathbb{X}} \not\in \SCG\}$, 
            \end{itemize}
            % $(\exists \mathcal{G}^i, \mathcal{G}^j \in MEC(\ADMG), \text{ \st } X\rightarrow Y \in \ADMG^i  \text{ and } Y\rightarrow X \in \ADMG^j) \text{ and } (S_{\mathbb{X}} \rightarrow S_{\mathbb{Y}} \in \SCG \text{ and } S_{\mathbb{Y}} \rightarrow S_{\mathbb{X}} \not\in \SCG )$ \},
        \end{itemize}
    \item $\mathbb{E}^{-} 
    := \{\, X_{t'} - Y_t \;\mid\;  
        \forall X_{t'},Y_t \in \mathbb{V}, \text{ if }$
        \begin{itemize}
            \item $\exists \mathcal{G}^i, \mathcal{G}^j \in MEC(\ADMG), \text{ \st } X\rightarrow Y \in \ADMG^i  \text{ and } Y\rightarrow X \in \ADMG^j \text{ and}$
            \item one of the following holds:
            \begin{itemize}
                \item $S_{\mathbb{X}} \rightarrow S_{\mathbb{Y}} \not\in SCG  \text{ and } S_{\mathbb{Y}} \rightarrow S_{\mathbb{X}} \not\in \SCG$, or
                \item $S_{\mathbb{X}} \rightarrow S_{\mathbb{Y}} \in SCG  \text{ and } S_{\mathbb{Y}} \rightarrow S_{\mathbb{X}} \in \SCG\}$,
            \end{itemize}
            %($S_{\mathbb{X}} \rightarrow S_{\mathbb{Y}} \not\in SCG  \text{ and } S_{\mathbb{Y}} \rightarrow S_{\mathbb{X}} \not\in \SCG$) or ($S_{\mathbb{X}} \rightarrow S_{\mathbb{Y}} \in SCG  \text{ and } S_{\mathbb{Y}} \rightarrow S_{\mathbb{X}} \in \SCG$),
        \end{itemize}
\end{itemize}
where $\eMPDAG=\mathbb{E}^{\rightarrow}\cup \mathbb{E}^{-}$.
%    An FT-MPDAG $\MPDAG = (\vMPDAG, \eMPDAG)$ using an SCG $\SCG$ is defined as an FT-CPDAG augmented with additional orientations: if $S_\mathbb{X} \to S_\mathbb{Y} \in \SCG$, then $X_t \to Y_t \in \MPDAG$.
\end{definition}
As in the case of SCGs, all graphical notions introduced for FT-DAGs, applies to FT-MPDAG.
Under Assumptions~\ref{assumption:causal_sufficiency},~\ref{assumption:faithfulness} and~\ref{assumption:stationarity}, causal discovery algorithms can recover the FT-MPDAG from observational data.
One of the most widely used algorithms is the PC algorithm~\citep{Spirtes_2000,Colombo_2014}, which can be naturally extended to time series. We refer to this temporal extension as tPC. Several variants of tPC have been proposed in the literature~\citep{Spirtes_2000,Runge_2019,Runge_2020,Assaad_2022,Bang_2023}, each offering different advantages. For clarity and focus, we restrict our attention to the simplest version, which forms the foundation of these extensions. The only modification we introduce is that we present tPC explicitly in the presence of background knowledge provided by the SCG.

The tPC algorithm proceeds as follows:
It starts with a fully connected unoriented graph. It orients all lagged relations from past to present, and orients all instantaneous edges that can be directly resolved using the SCG (\ie, if $S_{\mathbb{X}} \rightarrow S_{\mathbb{Y}}$ but $S_{\mathbb{Y}} \not\rightarrow S_{\mathbb{X}}$, then $X_t \rightarrow Y_t$ is oriented).
Using conditional independence tests, it prunes edges from the graph: an edge between $X_{t'}$ and $Y_t$ is removed if there exists a conditioning set $\mathbb{Z}$ \st $Y_t$ and $X_{t'}$ are statistically independent given $\mathbb{Z}$. %$ \indepc{Y_t}{X_{t'}}{\mathbb{Z}}{P}$.
The above step guarantees detecting the correct skeleton of the graph. At this step, because of the temporal lag orientations (i.e. $X_{t'}\to Y_t$ is oriented whenever $t'<t$), the only edges which are unoriented are instantaneous edges (between $X_t$ and $Y_t$ at the same instant $t$).
Then, for each unshielded triple $Z_{t'} - X_t - Y_t$, it identifies an unshielded collider $Z_{t'} \rightarrow X_t \leftarrow Y_t$ if the middle node $X_t$ does not belong to the separating set that rendered $Y_t$ and $Z_t$ statistically independent. For clarity, we will refer to this rule as the \textbf{UC-Rule}. %$ \indep{Y_t}{Z_t}{P}$.
Then, it iteratively applies Meek’s four orientation rules~\citep{Meek_1995} to orient as much as possible the remaining unoriented edges. For clarity, in the following we only recall Rules~1 and 2 of Meek, which will be useful for the remainder of the paper.

% \begin{center}    
    \text{\textbf{Meek-Rule 1:}} If $Z_{t'}\rightarrow X_t - Y_t$ is unshielded, then $Z_{t'}\rightarrow X_t \rightarrow Y_t$.

    \text{\textbf{Meek-Rule 2:}} If there is a directed path from $X_t$ to $Y_t$ and $X_t-Y_t$, then $X_t\to Y_t$. 
% \end{center}

Finally, a subtle background knowledge rule implied by the SCG can be applied to all discovered FT-MPDAGs. By definition, an edge $S_\mathbb{X}\to S_\mathbb{Y}$ is present in the SCG if and only if there exist $t'$ and $t$ \st $X_{t'}\to Y_t$ in the FT-DAG. Hence, if there is a remaning unoriented instantaneous edge $X_t-Y_t$ at this step and no other oriented edge from $X_{t'}$ to $Y_t$, then by this definition of the SCG (and stationarity) we must have $X_t\to Y_t$; otherwise, the edge $S_\mathbb{X}\to S_\mathbb{Y}$ in the SCG would be contradicted.

The FT-DAG and the FT-MPDAG are guaranteed to share the same skeleton and the same unshielded colliders~\citep{Verma_1990}, along with additional edge orientations implied either by constrains from conditional independencies or by background knowledge.
Moreover, as we mentionned, due to the temporal priority of causes over their effects, all lagged relations are oriented identically in both the FT-DAG and the FT-MPDAG.
By definition, the only edges that may remain unoriented in the FT-MPDAG are the instantaneous ones.
So if the goal is to determine whether an edge such as $X_{t-1}\rightarrow Y_t$ exists in the underlying DAG, then applying these algorithms is certainly worthwhile. However, when the question concerns edges of the form $X_t \rightarrow Y_t$, the situation becomes less straightforward as after running causal discovery, it is possible that the FT-MPDAG only indicates an unoriented edge $X_t - Y_t$, leaving the causal direction unresolved.

In this paper, we aim to characterize the orientations 
of these instantaneous edges by exploiting the background knowledge provided by an SCG.
To this end, we introduce a simple operator that captures the status of the causal orientation between two nodes in an FT-DAG or in an FT-MPDAG:
%Le $orient(X_t, Y_t; \MPDAG)$
\begin{align*}
\text{status}(X_t, Y_t; \MPDAG) =
\begin{cases}
X_t \to Y_t & \text{if } X_t \to Y_t \in \MPDAG,\\
Y_t \to X_t & \text{if } Y_t \to X_t \in \MPDAG,\\
1 & \text{if } X_t - Y_t \text{ in }\MPDAG,\\
0 & \text{otherwise}.%%\text{if there is no edge between } X_t \text{ and } Y_t \text{ in }\MPDAG,\\
\end{cases}
\end{align*}

The central concept underlying our characterization is the orientability of edges from an SCG.

\begin{definition}[Orientability of an edge from SCGs and faithful distributions compatible with an FT-DAG]
Let $\SCG$ be an SCG, let $\ADMG \in \compatible{\SCG}$, and let $\mathbb{P}$ denote the set of faithful distributions compatible with $\ADMG$.  
The direct relation between $X_t$ and $Y_t$, i.e., $\text{status}(X_t, Y_t; \ADMG)$, is said to be \emph{orientable} from $(\SCG, \mathbb{P})$ if
$$\forall P \in \mathbb{P}, \quad \text{status}(X_t, Y_t; \MPDAG)\ne 1,$$
where $\MPDAG$ denotes the FT-MPDAG constructed from $P$ and $\SCG$.
\end{definition}

Since there might exists many FT-DAGs compatible with an SCG the set of distributions compatible with an SCG is larger than the set of distributions compatible with a DAG. Consequently, the standard definition of orientability is not sufficient for our purposes. In this paper, we assume that the underlying FT-DAG is unknown, and thus we cannot classify a distribution as being compatible with any specific FT-DAG. Instead, we must account for all distributions compatible with the SCG. This distinction motivates the notion of \emph{$s$-orientability}.

\begin{definition}[$s$-orientability of an edge from an SCG and all compatible faithful distributions]
Let $\SCG$ be an SCG, let $\ADMG \in \compatible{\SCG}$, and let $\mathbb{P}^*$ denote the set of faithful distributions compatible with at least one $\ADMG\in \compatible{\SCG}$.  
The direct relation between $X_t$ and $Y_t$, i.e., $\text{status}(X_t, Y_t; \ADMG)$, is said to be \emph{$s$-orientable} from $(\SCG, \mathbb{P}^*)$ if for every $\ADMG\in \compatible{\SCG}$ and every set of distribution $\mathbb{P}\subset\mathbb{P}^*$ compatible with  $\ADMG$, $\text{status}(X_t, Y_t; \ADMG)$ is orientable from $(\SCG, \mathbb{P})$, \ie,
$$\forall P \in \mathbb{P}^*, \quad \text{status}(X_t, Y_t; \MPDAG)\ne 1,$$
where $\MPDAG$ denotes the FT-MPDAG constructed from $P$ and $\SCG$. 
\end{definition}

We emphasize that we define $s$-orientability as \emph{$s$-orientability from the pair $(\SCG, \mathbb P^*)$} to highlight that edge orientations can only be determined given a faithful distribution compatible with $\SCG$. However, since the set of faithful distributions compatible with any FT-DAG compatible with $\SCG$ depends solely on $\SCG$, $s$-orientability is fundamentally a property of $\SCG$ itself, independent of any specific distribution. While we retain the original notation to stress the role of faithfulness throughout the paper, one could equivalently refer to it as \emph{$s$-orientability from $\SCG$} alone.

%In what follows, we shall use $\text{status}(X_t, Y_t; \ADMG)$ without explicitly redefining $\ADMG$ at each occurrence.

% \textcolor{red}{
% \begin{definition}[s-Identifiability of an edge from an SCG]
% The direct relation between $X_t$ and $Y_t$, \ie,  $\text{status}(X_t, Y_t; \ADMG)$, is said to be s-identifiable from the $SCG$ if for any FT-MPDAG  
% inferred from any distribution compatible with $\ADMG$, 
% $\text{status}(X_t, Y_t; \MPDAG)\ne 1$.
% \end{definition}
% }

% \textcolor{red}{clarify the def by saying that }

\section{Main results}
\label{sec:main}

Starting from Definition~\ref{def:SCG}, it is clear that an oriented (non-bidirected) edge in the SCG is highly informative. Indeed, it indicates that any corresponding edge in the FT-MPDAG is oriented in the same direction. Thus, if an instantaneous edge exists between $X_t$ and $Y_t$, it is oriented consistently with the corresponding macro-variables in the SCG. Lemma~\ref{lemma:directed_edge} formalizes this intuitive result.

\begin{lemma}\label{lemma:directed_edge}
Let $\SCG$ be an SCG  and $\mathbb{P}^*$ the set of distributions compatible with $\SCG$.  
Under Assumptions~\ref{assumption:causal_sufficiency},~\ref{assumption:faithfulness} and~\ref{assumption:stationarity}, if $S_\mathbb{X} \to S_\mathbb{Y} \in \SCG$, then $\forall \ADMG\in\compatible{\SCG}$,  $\text{status}(X_t,Y_t;\ADMG)$ is $s$-orientable from $(\SCG, \mathbb{P}^*)$.
\end{lemma}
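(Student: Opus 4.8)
The plan is to show that the hypothesis $S_\mathbb{X}\to S_\mathbb{Y}\in\SCG$ (and $S_\mathbb{Y}\to S_\mathbb{X}\notin\SCG$, which is implicit since $S_\mathbb{X}\to S_\mathbb{Y}$ being a directed non-bidirected edge means the reverse edge is absent) forces $\text{orient}(X_t,Y_t;\MPDAG)\ne 1$ for every $P\in\mathbb{P}^*$. Fix an arbitrary $\ADMG\in\compatible{\SCG}$ and an arbitrary distribution $P\in\mathbb{P}^*$ compatible with $\ADMG$; let $\MPDAG$ be the FT-MPDAG built from $P$ and $\SCG$. There are two cases to handle. Case 1: there is no edge between $X_t$ and $Y_t$ in $\MPDAG$, so $\text{orient}(X_t,Y_t;\MPDAG)=0\ne 1$ and there is nothing to prove. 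Case 2: there is an edge between $X_t$ and $Y_t$ in $\MPDAG$; here I must show it is oriented as $X_t\to Y_t$.

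First I would recall from Definition~\ref{def:SCG} that $S_\mathbb{X}\to S_\mathbb{Y}\in\eSCG$ means there exist $t'\le t$ with $X_{t'}\to Y_t\in\eADMG$; combined with stationarity (Assumption~\ref{assumption:stationarity}), this gives a lagged edge pattern $X_{t'-\ell}\to Y_{t-\ell}$ for all valid $\ell$. Next I would argue that the absence of $S_\mathbb{Y}\to S_\mathbb{X}$ in $\SCG$ means that in $\ADMG$ there is no edge $Y_{s'}\to X_s$ for any $s'\le s$ whatsoever — in particular no instantaneous edge $Y_t\to X_t$ and no lagged edge from the $Y$-series into the $X$-series. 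Now I would invoke the construction of the FT-MPDAG: by Definition~\ref{def:MPDAG}, whenever $S_\mathbb{X}\to S_\mathbb{Y}\in\SCG$ and $S_\mathbb{Y}\to S_\mathbb{X}\notin\SCG$, any instantaneous edge present in the skeleton between $X_t$ and $Y_t$ is placed into $\mathbb{E}^{\rightarrow}$ oriented as $X_t\to Y_t$ — this is precisely the branch $(iia)$–$(iib)$ of the $\mathbb{E}^{\rightarrow}$ clause, and it also matches the explicit augmentation rule ``if $S_\mathbb{X}\to S_\mathbb{Y}\in\SCG$ then $X_t\to Y_t\in\MPDAG$'' stated at the end of Definition~\ref{def:MPDAG}, as well as the tPC step that orients instantaneous edges directly resolvable from the SCG. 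So in Case 2 the edge is oriented $X_t\to Y_t$, giving $\text{orient}(X_t,Y_t;\MPDAG)=X_t\to Y_t\ne 1$. Since $P\in\mathbb{P}^*$ and $\ADMG\in\compatible{\SCG}$ were arbitrary, this establishes s-identifiability.

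The one subtlety — and the main thing to get right rather than the main obstacle — is making sure the two cases are genuinely exhaustive and consistent with the definition of s-identifiability, namely that $\text{orient}$ can legitimately take the value $0$ (no edge) for some $P$ and $X_t\to Y_t$ for others, yet never $1$; the key point is that $S_\mathbb{X}\to S_\mathbb{Y}\in\SCG$ does \emph{not} force an instantaneous edge between $X_t$ and $Y_t$ in every compatible FT-DAG (the witnessing edge from Definition~\ref{def:SCG} may be purely lagged), so we cannot upgrade to ordinary identifiability, but whenever the instantaneous edge \emph{is} present it is pinned down in direction. A secondary point to verify is that Definition~\ref{def:MPDAG} never reorients such an edge back to $Y_t\to X_t$ or to undirected via Meek's rules or the residual SCG-consistency rule — this follows because $S_\mathbb{Y}\to S_\mathbb{X}\notin\SCG$ rules out the undirected clause $\mathbb{E}^{-}$ entirely for this pair, and any orientation produced by Meek or by the final SCG-consistency step is by construction consistent with (never contradicts) the forced orientation $X_t\to Y_t$.
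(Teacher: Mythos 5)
Your proof is correct and takes essentially the same route as the paper, which simply states that the lemma is an immediate consequence of the definition of the FT-MPDAG; you have just spelled out that one-liner by checking the cases (no edge versus edge forced to $X_t\to Y_t$ by the SCG-augmentation clause). Your side remarks — that the reverse edge $S_\mathbb{Y}\to S_\mathbb{X}$ must be absent for the statement to be non-vacuous, and that $\text{orient}$ may legitimately equal $0$ for some distributions — are accurate and consistent with how the paper uses the lemma afterwards.
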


\begin{proof}
    Immediate consequence of Definition~\ref{def:MPDAG}.
\end{proof}

We thus obtain that the only potentially unoriented edges in the FT-MPDAG are instantaneous edges of the form $X_t - Y_t$, whose associated macro-variables are bidirected in the SCG, i.e., $S_\mathbb{X} \rightleftarrows S_\mathbb{Y}$. The following lemma further reduces these cases by introducing a condition on the self-loops of the macro-variables.

\begin{lemma}\label{lemma:bidir_singleSL}
Let $\SCG$ be an SCG  and $\mathbb{P}^*$ the set of distributions compatible with $\SCG$.  
Under Assumptions~\ref{assumption:causal_sufficiency},~\ref{assumption:faithfulness} and~\ref{assumption:stationarity}, if $S_\mathbb{X} \rightleftarrows S_\mathbb{Y} \in \SCG$ and there is no self-loop on \textbf{both} $S_\mathbb{X}$ \textbf{and} $S_\mathbb{Y}$ \textbf{simultaneously}, then $\forall \ADMG\in\compatible{\SCG}$,  $\text{status}(X_t,Y_t;\ADMG)$ is $s$-orientable from $(\SCG, \mathbb{P}^*)$.
\end{lemma}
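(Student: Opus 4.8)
The plan is to prove the restatement given by the ``i.e.'' in the definition of s-identifiability: for every $P\in\mathbb{P}^*$, that is, every (faithful) distribution compatible with some $\ADMG\in\compatible{\SCG}$, the FT-MPDAG $\MPDAG$ built from $P$ and $\SCG$ satisfies $\text{orient}(X_t,Y_t;\MPDAG)\neq 1$; equivalently, the edge between $X_t$ and $Y_t$ is never left as $X_t-Y_t$. (Under Assumptions~\ref{assumption:causal_sufficiency} and~\ref{assumption:faithfulness} a fixed $\ADMG$ and any compatible $P$ yield the same CPDAG, hence the same $\MPDAG$, so the only content is this ``$\neq 1$'' claim.) Fix such an $\ADMG$ and $P$. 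If $\ADMG$ has no edge between $X_t$ and $Y_t$, then $\MPDAG$ has none either (same skeleton), so $\text{orient}(X_t,Y_t;\MPDAG)=0$ and we are done; so assume the instantaneous edge is present.

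By hypothesis at least one of $S_\mathbb{X},S_\mathbb{Y}$ carries no self-loop, and since $S_\mathbb{X}\rightleftarrows S_\mathbb{Y}$ and both the hypothesis and the conclusion are symmetric under exchanging $X$ and $Y$, I would assume without loss of generality that $S_\mathbb{X}$ has no self-loop; then $X_{t-\ell}$ is non-adjacent to $X_t$ (in $\ADMG$, hence in $\MPDAG$) for every $\ell\ge 1$. I then split on whether $\ADMG$ contains some lagged edge $X_{t-\ell}\to Y_t$ with $\ell\ge 1$.

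In the first case such an $X_{t-\ell}\to Y_t$ ($\ell\ge 1$) lies in $\ADMG$; by temporal priority it is oriented $X_{t-\ell}\to Y_t$ in $\MPDAG$, and since $X_{t-\ell}$ and $X_t$ are non-adjacent the triple $X_{t-\ell}\to Y_t - X_t$ is unshielded. If $X_t-Y_t$ were still undirected in $\MPDAG$, Meek's Rule~1 would force $Y_t\to X_t$, a contradiction; hence the edge is oriented (as the unshielded collider $X_t\to Y_t$ when $X_t\to Y_t\in\ADMG$, and as $Y_t\to X_t$ otherwise). In the second case $\ADMG$ has no lagged edge $X_{t-\ell}\to Y_t$ at all; since $S_\mathbb{X}\to S_\mathbb{Y}\in\SCG$, Definition~\ref{def:SCG} together with stationarity (Assumption~\ref{assumption:stationarity}) yields some $X_{t-\delta}\to Y_t\in\ADMG$ with $\delta\ge 0$, and the case assumption forces $\delta=0$, i.e.\ $X_t\to Y_t\in\ADMG$. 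In $\MPDAG$ this edge is either already oriented (done) or still undirected after the CPDAG and Meek orientations, in which case, as there is no lagged edge $X_{t-\ell}\to Y_t$, the SCG-consistency rule orients it $X_t\to Y_t$ (otherwise $S_\mathbb{X}\to S_\mathbb{Y}$ would have no witness in the FT-DAG). Either way $\text{orient}(X_t,Y_t;\MPDAG)\neq 1$, and since this holds for all $\ADMG$ and $P$, the orientation is s-identifiable.

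I expect the main difficulty to be organizational rather than computational: one has to track precisely which mechanism performs the orientation in each configuration — unshielded-collider detection, Meek's Rule~1 triggered by a lagged parent, or the SCG-consistency rule — check that the two cases are exhaustive, and verify that the no-self-loop hypothesis on $S_\mathbb{X}$ is exactly what makes the triple in the first case unshielded, while the second case relies only on the SCG edge $S_\mathbb{X}\to S_\mathbb{Y}$. A minor technical nuisance is the boundary of the observation window $[t_0,t_{\max}]$: the edges obtained by shifting via stationarity must remain inside the window, so, as elsewhere in the paper, $t$ should be taken in the interior of the observed window.
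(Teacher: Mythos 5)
Your proof is correct and relies on the same mechanisms as the paper's: unshielded\hyp{}collider detection, Meek's Rule~1 triggered by an oriented lagged parent of the middle node, and the SCG\hyp{}consistency rule forcing $X_t\to Y_t$ when no lagged $X_{t'}\to Y_t$ edge exists. You merely reorganize the case split (one WLOG on which macro-node lacks a self-loop, then a dichotomy on the existence of a lagged $X_{t-\ell}\to Y_t$ edge, versus the paper's split into ``no self-loops'' and ``one self-loop'' followed by sub-cases), which is arguably slightly cleaner since it makes exhaustiveness immediate, but it is essentially the same argument.
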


\begin{proof}
If there is no edge between $X_t$ and $Y_t$ in $\ADMG$, then no edge exists between them in $\MPDAG$,  
since $\MPDAG$ shares the same skeleton as $\ADMG$.

Suppose there is an edge between $X_t$ and $Y_t$ in $\MPDAG$. We show it must be oriented. First, assume there are no self-loops. Since $S_\mathbb{X} \rightleftarrows S_\mathbb{Y} \in \SCG$, there exist $t_1,t_2$ \st $t-\gamma_{\max}\le t_1, t_2 \le t$, and $Y_{t_1} \to X_t\in \MPDAG$ and $X_{t_2} \to Y_t\in \MPDAG$; and since $\MPDAG$ is acyclic, $t_1<t$ or $t_2<t$. Assume without loss of generality that $t_1<t$. Then, because of temporal ordering $Y_{t_1} \to X_t \in \MPDAG$ and $(Y_{t_1}, X_t, Y_t)$ is an unshielded triple, since $S_\mathbb{Y}$ has no self-loop and thus $Y_{t_1}$ and $Y_t$ are not adjacent. This triple is either an unshielded collider $Y_{t_1} \to X_t \leftarrow Y_t$, which is oriented in $\MPDAG$ by UC-rule, or a chain $Y_{t_1} \to X_t \to Y_t$, which is oriented in $\MPDAG$ by Meek-Rule 1. Now, suppose there is a self-loop on one of the nodes. Without loss of generality, let $S_\mathbb{Y}$ have a self-loop. Then, there exists $t_1 < t$ \st $Y_{t_1} \to Y_t$ in the FT-MPDAG. Two cases arise: (i) there is no $t_2$ \st $t-\gamma_{\max}\le t_2<t$ and $X_{t_2} \to Y_t$ so necessarily $X_t \to Y_t$ to satisfy the SCG constraint $S_\mathbb{X} \rightleftarrows S_\mathbb{Y}$, which requires at least one edge between a time point of $\{X_{t_0},\cdots, X_{t_{\max}}\}$ and one of $\{Y_{t_0},\cdots, Y_{t_{\max}}\}$; or (ii) there exists $t_3$ \st $t-\gamma_{\max}\le t_3< t$ and $X_{t_3} \to Y_t$ so $(X_{t_3}, X_t, Y_t)$ forms an unshielded triplet (since $S_\mathbb{X}$ has no self-loop), and the edge $X_t$–$Y_t$ is oriented either by Meek-Rule 1 if $X_t \leftarrow Y_t\leftarrow X_{t_3}$, or by the UC-rule if $X_t \to Y_t\leftarrow X_{t_3}$.
\end{proof}

\begin{figure}[t!]
    \centering
    % ---------------- First row: no self-loop ----------------
    \begin{subfigure}[b]{0.11\textwidth}
        \centering
        \begin{tikzpicture}[{black, circle, draw, inner sep=0}]
            \tikzset{nodes={draw,rounded corners},minimum height=0.7cm,minimum width=0.7cm}
            \node (X) at (0,0) {$S_\mathbb X$};
            \node (Y) at (1.2,0) {$S_\mathbb Y$};
            \draw[->,>=latex] (Y) edge[bend left=15] (X);
            \draw[->,>=latex] (X) edge[bend left=15] (Y);
        \end{tikzpicture}
        \caption{SCG $\SCG_1$.}
    \end{subfigure}
    \hfill
    \begin{subfigure}[b]{0.17\textwidth}
        \centering
        \begin{tikzpicture}[{black, circle, draw, inner sep=0}]
            \tikzset{nodes={draw,rounded corners},minimum height=0.7cm,minimum width=0.7cm}

            \node[draw=none] at (-0.6,0) {...};
            \node[draw=none] at (1.6,0) {...};
        
            \node[draw=none] at (-0.6,-1) {...};
            \node[draw=none] at (1.6,-1) {...};

            \node (X-1) at (0,0) {$X_{t-1}$};
            \node (X) at (1,0) {$X_{t}$};
            \node (Y-1) at (0,-1) {$Y_{t-1}$};
            \node (Y) at (1,-1) {$Y_{t}$};
            \draw[->,>=latex, color = red] (Y) -- (X);
            \draw[->,>=latex, color = blue] (Y-1) -- (X-1);
            \draw[->,>=latex, color = red] (X-1) -- (Y);
        \end{tikzpicture}
        \caption{FT-MPDAG $\MPDAG_1^1$.}
    \end{subfigure}
    \hfill
    \begin{subfigure}[b]{0.17\textwidth}
        \centering
        \begin{tikzpicture}[{black, circle, draw, inner sep=0}]
            \tikzset{nodes={draw,rounded corners},minimum height=0.7cm,minimum width=0.7cm}

            \node[draw=none] at (-0.6,0) {...};
            \node[draw=none] at (1.6,0) {...};
        
            \node[draw=none] at (1.6,-1) {...};
            \node[draw=none] at (-0.6,-1) {...};

            \node (X-1) at (0,0) {$X_{t-1}$};
            \node (X) at (1,0) {$X_{t}$};
            \node (Y-1) at (0,-1) {$Y_{t-1}$};
            \node (Y) at (1,-1) {$Y_{t}$};
            \draw[->,>=latex, color = red] (Y) -- (X);
            \draw[->,>=latex, color = red] (Y-1) -- (X);
            \draw[->,>=latex, color = blue] (Y-1) -- (X-1);
        \end{tikzpicture}
        \caption{FT-MPDAG $\MPDAG_2^1$.}
    \end{subfigure}

    % ---------------- Second row: with self-loop ----------------
    \vspace{0.3cm}
    \begin{subfigure}[b]{0.11\textwidth}
        \centering
        \begin{tikzpicture}[{black, circle, draw, inner sep=0}]
            \tikzset{nodes={draw,rounded corners},minimum height=0.7cm,minimum width=0.7cm}
            \node (X) at (0,0) {$S_\mathbb X$};
            \node (Y) at (1.2,0) {$S_\mathbb Y$};
            \draw[->,>=latex] (Y) edge[bend left=15] (X);
            \draw[->,>=latex] (Y) edge[loop above, looseness=1, min distance=5mm] (Y);
            \draw[->,>=latex] (X) edge[bend left=15] (Y);
        \end{tikzpicture}
        \caption{SCG $\SCG_2$.}
    \end{subfigure}
    \hfill
    \begin{subfigure}[b]{0.17\textwidth}
        \centering
        \begin{tikzpicture}[{black, circle, draw, inner sep=0}]
            \tikzset{nodes={draw,rounded corners},minimum height=0.7cm,minimum width=0.7cm}

            \node[draw=none] at (-0.6,0) {...};
            \node[draw=none] at (1.6,0) {...};
        
            \node[draw=none] at (-0.6,-1) {...};
            \node[draw=none] at (1.6,-1) {...};

            \node (X-1) at (0,0) {$X_{t-1}$};
            \node (X) at (1,0) {$X_{t}$};
            \node (Y-1) at (0,-1) {$Y_{t-1}$};
            \node (Y) at (1,-1) {$Y_{t}$};
            \draw[->,>=latex, color = red] (X) -- (Y);
            \draw[->,>=latex, color = blue] (X-1) -- (Y-1);
            \draw[->,>=latex] (Y-1) -- (Y);
            \draw[->,>=latex] (Y-1) -- (X);
        \end{tikzpicture}
        \caption{FT-MPDAG $\MPDAG_1^2$.}
    \end{subfigure}
    \hfill
    \begin{subfigure}[b]{0.17\textwidth}
        \centering
        \begin{tikzpicture}[{black, circle, draw, inner sep=0}]
            \tikzset{nodes={draw,rounded corners},minimum height=0.7cm,minimum width=0.7cm}

            \node[draw=none] at (-0.6,0) {...};
            \node[draw=none] at (1.6,0) {...};
        
            \node[draw=none] at (-0.6,-1) {...};
            \node[draw=none] at (1.6,-1) {...};

            \node (X-1) at (0,0) {$X_{t-1}$};
            \node (X) at (1,0) {$X_{t}$};
            \node (Y-1) at (0,-1) {$Y_{t-1}$};
            \node (Y) at (1,-1) {$Y_{t}$};
            \draw[->,>=latex, color = red] (Y) -- (X);
            \draw[->,>=latex, color = blue] (Y-1) -- (X-1);
            \draw[->,>=latex, color = red] (X-1) -- (Y);  
            \draw[->,>=latex] (Y-1) -- (Y);
            \draw[->,>=latex] (Y-1) -- (X);
        \end{tikzpicture}
        \caption{FT-MPDAG $\MPDAG_2^2$.}
    \end{subfigure}

    \caption{Visual illustration of Lemma~\ref{lemma:bidir_singleSL}, with the top FT-MPDAGs based on the top SCG $\SCG_1$ (no self-loop) and the bottom FT-MPDAGs based on the bottom SCG $\SCG_2$ (with a self-loop). Red illustrates the edges used to orient the edge $X_t-Y_t$, blue is the repetition of this edge by stationarity.}
    \label{fig:lemma2_visual_combined}
\end{figure}

Figure~\ref{fig:lemma2_visual_combined} provides an intuitive visual illustration of Lemma~\ref{lemma:bidir_singleSL}. The figures are separated into two cases: one corresponds to an SCG $\SCG_1$ without any self-loop (top) and one corresponds to an SCG $\SCG_2$ with a single self-loop (bottom). In each case, the SCG is shown alongside two examples of compatible FT-MPDAGs, ($\MPDAG_1^1$ and $\MPDAG_2^1$ for the case without self-loop, $\MPDAG_1^2$ and $\MPDAG_2^2$, for the case with self-loop).
In $\MPDAG_1^1$, the edge between $X_t$ and $Y_t$ is necessarily oriented according to Meek-Rule 1, $Y_{t-1} \to Y_t \to X_t$ (given that $Y_{t-1}\to Y_t$ is oriented by temporal order). In $\MPDAG_2^1$, the edge between $X_t$ and $Y_t$ is oriented due to the unshielded collider $Y_{t-1}\to X_t \leftarrow Y_t$. Note that this Meek rule and the unshielded collider are applied because the triplet $(Y_{t-1}, X_t, Y_t)$ is always unshielded due to the absence of self-loops. In $\MPDAG_1^2$, if the edge between $X_t$ and $Y_t$ were oriented in the opposite direction, $Y_t \leftarrow X_t$, this would contradict the SCG $\SCG$, as no edge would then point from $S_\mathbb{X}$ to $S_\mathbb{Y}$. Therefore, the edge must be oriented as $X_t \to Y_t$ in all compatible DAGs and hence in the FT-MPDAG. In $\MPDAG_2^2$, the situation is more similar to the case without self-loops: the edge between $X_{t-1}$ and $Y_t$ is oriented according to Meek-Rule 1, $X_{t-1} \to Y_t \to X_t$.

From Lemma~\ref{lemma:directed_edge} and~\ref{lemma:bidir_singleSL}, it immediately follows that for an SCG $\SCG$ without self-loops, any FT-MPDAG $\MPDAG$ based on $\SCG$ contains no unoriented edges (it is an FT-DAG).

At this stage, the only potentially unoriented edges in the FT-MPDAG are instantaneous edges whose corresponding macro-variables that are bidirected in the SCG and each contain a self-loop. Lemma~\ref{lemma:bidir+UC} further restricts the possibility of unoriented edges in the FT-MPDAG by stating that such edges are oriented whenever there exists a node that is a parent of one macro-node but not of the other.

\begin{lemma}\label{lemma:bidir+UC}
Let $\SCG$ be an SCG and $\mathbb{P}^*$ the set of distributions compatible with $\SCG$.  
Under Assumptions~\ref{assumption:causal_sufficiency},~\ref{assumption:faithfulness} and~\ref{assumption:stationarity}, if $S_\mathbb{X} \rightleftarrows S_\mathbb{Y}\in \SCG$ and $\exists S_\mathbb Z\in Pa(S_\mathbb X,\SCG)\setminus \{S_\mathbb X\}$ \st $S_\mathbb Z\notin Pa(S_\mathbb Y,\SCG)$, then $\forall \ADMG\in\compatible{\SCG}$,  $\text{status}(X_t,Y_t;\ADMG)$ is $s$-orientable from $(\SCG, \mathbb{P}^*)$.
% $\exists S_\mathbb Z\in \vSCG$ \st the triple $\{S_\mathbb X, S_\mathbb Y, S_\mathbb Z\}$ forms an unshielded collider in $\SCG$, then $\forall \ADMG\in\compatible{\SCG}$,  $\text{status}(X_t,Y_t;\ADMG)$ is $s$-orientable from $(\SCG, \mathbb{P}^*)$.
\end{lemma}

\begin{proof}
If there is no edge between $X_t$ and $Y_t$ in $\ADMG$, then no edge exists between them in $\MPDAG$, since $\MPDAG$ shares the same skeleton as $\ADMG$.

Suppose that there is an edge between $X_t$ and $Y_t$ in $\MPDAG$. We show that such an edge must be oriented. First, since $S_{\mathbb Z}\in Pa(S_{\mathbb X},\SCG)$, there exists $t' \le t$ such that $Z_{t'} \to X_t$. If $t' < t$, the triple $(Z_{t'},X_t,Y_t)$ is unshielded because $Y_t \to Z_{t'}$ is impossible due to the temporal order. Hence this triple either forms a chain, which is oriented by Meek's rule~1 in $\MPDAG$, or an unshielded collider, which is also oriented by the UC-Rule in $\MPDAG$. If $t'=t$ and there is no $t''<t$ \st $Z_{t''}\to X_t$, the edge $Z_{t}\to X_t$ is necessarily oriented in $\MPDAG$ (otherwise it would contradict $S_{\mathbb Z}\in Pa(S_{\mathbb X},\SCG)$). Then we distinguish two case. If $Y_t$ is not adjacent to $Z_t$ in $\MPDAG$, the unshielded triple $(Z_t,X_t,Y_t)$ can be oriented by the same argument as in the case $t'< t$. If instead $Y_t$ is adjacent to $Z_t$, then we necessarily have $Y_t \to Z_t$ since $S_\mathbb Z\notin Pa(S_\mathbb Y,\SCG)$. To avoid creating a cycle (Meek's rule~2), since $Y_t \to Z_t \to X_t$, the edge $Y_t \to X_t$ must be oriented in $\MPDAG$.
\end{proof}

% Lemma~\ref{lemma:bidir+UC} simply asserts that $S_\mathbb X\rightleftarrows S_\mathbb Y$ is orientable whenever there exists a node $S_\mathbb{Z}$ forming a collider $S_\mathbb{Z} \to S_\mathbb{X} \leftrightarrows S_\mathbb{Y}$ or $S_\mathbb{Z} \leftrightarrows S_\mathbb{X} \leftrightarrows S_\mathbb{Y}$ in $\SCG$, which is either unshielded ($S_\mathbb{Z} \notin Ne(S_\mathbb{Y}, \SCG)$) or shielded but not a parent of $S_\mathbb{Y}$.

\begin{figure}[t!]
    \centering
    \begin{subfigure}[b]{0.11\textwidth}
        \centering
        \begin{tikzpicture}[{black, circle, draw, inner sep=0}]
            \tikzset{nodes={draw,rounded corners},minimum height=0.7cm,minimum width=0.7cm}
            \node (X) at (0,0) {$S_\mathbb X$};
            \node (Y) at (1.5,0) {$S_\mathbb Y$};
            \node (Z) at (0.75,-1) {$S_\mathbb Z$};
            \draw[->,>=latex] (Y) edge[bend left=15] (X);
            \draw[->,>=latex] (X) edge[bend left=15] (Y);
            \draw[->,>=latex] (Z) edge (X);
            \draw[->,>=latex] (X) edge[loop above, looseness=1, min distance=5mm] (X);
            \draw[->,>=latex] (Y) edge[loop above, looseness=1, min distance=5mm] (Y);
            \draw[->,>=latex] (Z) edge[loop below, looseness=1, min distance=5mm] (Z);
        \end{tikzpicture}
        \caption{SCG $\SCG_1$.}
        \label{fig:lemma3scg}
    \end{subfigure}
    \hfill
    \begin{subfigure}[b]{0.17\textwidth}
        \centering
        \begin{tikzpicture}[{black, circle, draw, inner sep=0}]
            \tikzset{nodes={draw,rounded corners},minimum height=0.7cm,minimum width=0.7cm}

            \node[draw=none] at (-0.6,0) {...};
            \node[draw=none] at (1.6,0) {...};
        
            \node[draw=none] at (-0.6,-1) {...};
            \node[draw=none] at (1.6,-1) {...};
        
            \node[draw=none] at (-0.6,-2) {...};
            \node[draw=none] at (1.6,-2) {...};
            
            \node (X-1) at (0,0) {$X_{t-1}$};
            \node (X) at (1,0) {$X_{t}$};
            \node (Y-1) at (0,-1) {$Y_{t-1}$};
            \node (Y) at (1,-1) {$Y_{t}$};
            \node (Z-1) at (0,-2) {$Z_{t-1}$};
            \node (Z) at (1,-2) {$Z_{t}$};

            \draw[->,>=latex] (X-1) -- (X);
            \draw[->,>=latex] (Z-1) -- (Z);
            \draw[->,>=latex] (Z-1) to[bend left=40] (X-1);
            \draw[->,>=latex, red] (Z) to[bend right=40] (X);
            \draw[->,>=latex] (Y-1) -- (Y);
            \draw[->,>=latex] (Y-1) -- (X);
            \draw[->,>=latex, color = red] (Y) -- (X);
            \draw[->,>=latex, color = blue] (Y-1) -- (X-1);
            \draw[->,>=latex] (X-1) -- (Y);
        \end{tikzpicture}
        \caption{FT-MPDAG $\MPDAG_1^1$.}
        \label{fig:lemma3ftmpdag1}
    \end{subfigure}
    \hfill
    \begin{subfigure}[b]{0.17\textwidth}
        \centering
        \begin{tikzpicture}[{black, circle, draw, inner sep=0}]
            \tikzset{nodes={draw,rounded corners},minimum height=0.7cm,minimum width=0.7cm}

            \node[draw=none] at (-0.6,0) {...};
            \node[draw=none] at (1.6,0) {...};
        
            \node[draw=none] at (-0.6,-1) {...};
            \node[draw=none] at (1.6,-1) {...};
        
            \node[draw=none] at (-0.6,-2) {...};
            \node[draw=none] at (1.6,-2) {...};
            
            \node (X-1) at (0,0) {$X_{t-1}$};
            \node (X) at (1,0) {$X_{t}$};
            \node (Y-1) at (0,-1) {$Y_{t-1}$};
            \node (Y) at (1,-1) {$Y_{t}$};
            \node (Z-1) at (0,-2) {$Z_{t-1}$};
            \node (Z) at (1,-2) {$Z_{t}$};

            \draw[->,>=latex] (X-1) -- (X);
            \draw[->,>=latex] (Z-1) -- (Z);
            \draw[->,>=latex] (Y-1) -- (Y);
            \draw[->,>=latex] (Y-1) -- (X);
            \draw[->,>=latex, color = red] (Z-1) -- (X);
            \draw[->,>=latex, color = red] (X) -- (Y);
            \draw[->,>=latex, color = blue] (X-1) -- (Y-1);
            \draw[->,>=latex] (X-1) -- (Y);
        \end{tikzpicture}
        \caption{FT-MPDAG $\MPDAG_2^1$.}
        \label{fig:lemma3ftmpdag2}
    \end{subfigure}
    \label{fig:lemma3visual}
\hfill
        \begin{subfigure}[b]{0.11\textwidth}
        \centering
        \begin{tikzpicture}[{black, circle, draw, inner sep=0}]
            \tikzset{nodes={draw,rounded corners},minimum height=0.7cm,minimum width=0.7cm}
            \node (X) at (0,0) {$S_\mathbb X$};
            \node (Y) at (1.5,0) {$S_\mathbb Y$};
            \node (Z) at (0.75,-1) {$S_\mathbb Z$};
            \draw[->,>=latex] (Y) edge[bend left=15] (X);
            \draw[->,>=latex] (X) edge[bend left=15] (Y);
             \draw[->,>=latex] (Y) edge (Z);
              \draw[->,>=latex] (X) edge[bend left=15] (Z);
               \draw[->,>=latex] (Z) edge[bend left=15] (X);
             \draw[->,>=latex] (Y) edge[loop above, looseness=1, min distance=5mm] (Y);
            \draw[->,>=latex] (X) edge[loop above, looseness=1, min distance=5mm] (X);
            \draw[->,>=latex] (Z) edge[loop below, looseness=1, min distance=5mm] (Z);
        \end{tikzpicture}
        \caption{SCG $\SCG_2$.}
        \label{fig:lemma3scg}
    \end{subfigure}
    \hfill
    \begin{subfigure}[b]{0.17\textwidth}
        \centering
        \begin{tikzpicture}[{black, circle, draw, inner sep=0}]
            \tikzset{nodes={draw,rounded corners},minimum height=0.7cm,minimum width=0.7cm}

            \node[draw=none] at (-0.6,0) {...};
            \node[draw=none] at (1.6,0) {...};
        
            \node[draw=none] at (-0.6,-1) {...};
            \node[draw=none] at (1.6,-1) {...};
        
            \node[draw=none] at (-0.6,-2) {...};
            \node[draw=none] at (1.6,-2) {...};
            
            \node (X-1) at (0,0) {$X_{t-1}$};
            \node (X) at (1,0) {$X_{t}$};
            \node (Y-1) at (0,-1) {$Y_{t-1}$};
            \node (Y) at (1,-1) {$Y_{t}$};
            \node (Z-1) at (0,-2) {$Z_{t-1}$};
            \node (Z) at (1,-2) {$Z_{t}$};

            \draw[->,>=latex] (X-1) -- (X);
            \draw[->,>=latex] (Z-1) -- (Z);
            \draw[->,>=latex] (Y-1) -- (X);
            \draw[->,>=latex] (Y-1) -- (Y);
            \draw[->,>=latex] (Y-1) -- (Z);
            \draw[->,>=latex, red] (Z-1) -- (X);
            \draw[->,>=latex, color = red] (Y) -- (X);
            \draw[->,>=latex, color = blue] (Y-1) -- (X-1);
            \draw[->,>=latex] (X-1) -- (Y);
        \end{tikzpicture}
        \caption{FT-MPDAG $\MPDAG_1^2$.}
        \label{fig:lemma3ftmpdag3}
    \end{subfigure}
    \hfill
    \begin{subfigure}[b]{0.17\textwidth}
        \centering
        \begin{tikzpicture}[{black, circle, draw, inner sep=0}]
            \tikzset{nodes={draw,rounded corners},minimum height=0.7cm,minimum width=0.7cm}

            \node[draw=none] at (-0.6,0) {...};
            \node[draw=none] at (1.6,0) {...};
        
            \node[draw=none] at (-0.6,-1) {...};
            \node[draw=none] at (1.6,-1) {...};
        
            \node[draw=none] at (-0.6,-2) {...};
            \node[draw=none] at (1.6,-2) {...};
            
            \node (X-1) at (0,0) {$X_{t-1}$};
            \node (X) at (1,0) {$X_{t}$};
            \node (Y-1) at (0,-1) {$Y_{t-1}$};
            \node (Y) at (1,-1) {$Y_{t}$};
            \node (Z-1) at (0,-2) {$Z_{t-1}$};
            \node (Z) at (1,-2) {$Z_{t}$};

            \draw[->,>=latex] (X-1) -- (X);
            \draw[->,>=latex] (Z-1) -- (Z);
            \draw[->,>=latex] (Y-1) -- (Z-1);
            \draw[->,>=latex] (Y-1) -- (X);
            \draw[->,>=latex, color = red] (X) -- (Y);
            \draw[->,>=latex, color = blue] (X-1) -- (Y-1);
            \draw[->,>=latex] (X-1) -- (Y);
            \draw[->,>=latex, color = red] (Y) -- (Z);
            \draw[->,>=latex] (Z-1) to[bend left=40] (X-1);
            \draw[->,>=latex, color = red] (Z) to[bend right=40] (X);
        \end{tikzpicture}
        \caption{FT-MPDAG $\MPDAG_2^2$.}
        \label{fig:lemma3ftmpdag4}
    \end{subfigure}
     \caption{Visual illustration of Lemma~\ref{lemma:bidir+UC}, with the top FT-MPDAGs based on the top SCG $\SCG_1$ (unshielded collider) and the bottom FT-MPDAGs based on the bottom SCG $\SCG_2$ (shielded collider but with $S_\mathbb Z$ not a parent of $S_\mathbb Y$). Red illustrates the edges used to orient the edge $X_t-Y_t$, blue is the repetition of this edge by stationarity.}
     \label{fig:lemma3visual}
\end{figure}

%As for the previous lemma, we provide a visual illustration in addition to the proof, consisting of an SCG $\SCG$ and two compatible FT-MPDAG examples, $\MPDAG_1$ and $\MPDAG_2$, shown in Figure~\ref{fig:lemma3visual}.  
For a visual illustration of Lemma~\ref{lemma:bidir+UC}, consider the example given in Figure~\ref{fig:lemma3visual} consisting of two SCG $\SCG_1$ and $\SCG_2$ where $S_\mathbb X\leftrightarrows S_\mathbb Y$ satisfies condition of Lemma~\ref{lemma:bidir+UC}, and two compatible FT-MPDAGs for each. The SCG $\SCG_1$ contains an unshielded collider $S_\mathbb Z\to S_\mathbb X\leftrightarrows S_\mathbb Y$ and the SCG $\SCG_2$ contains a shielded collider $S_\mathbb Z\leftrightarrows S_\mathbb X\leftrightarrows S_\mathbb Y$ but with $S_\mathbb Z\notin Pa(S_\mathbb Y,\SCG_2)$. First, consider $\SCG_1$ : (i) in $\MPDAG_1^1$, the edge between $X_t$ and $Y_t$ is oriented by UC-rule applied to the triplet $Z_t \to X_t \leftarrow Y_t$, where $Z_t \to X_t$ is oriented by background knowledge from $S_\mathbb Z \to S_\mathbb X$ in $\SCG_1$; and (ii) in $\MPDAG_2^1$, the orientation arises from Meek-rule 1 applied to the chain $Z_{t-1} \to X_t \to Y_t$ with $Z_{t-1}\to X_t$ oriented by time. Now, consider $\SCG_2$: (i) in $\MPDAG_1^2$, the edge between $X_t$ and $Y_t$ is oriented by UC-rule applied to the triplet $Z_{t-1} \to X_t \leftarrow Y_t$; and (ii) in $\MPDAG_2^1$, the orientation arises from Meek-rule 2 applied to the triple $Z_t \to X_t \to Y_t$, with $Y_t\to Z_t$ oriented by background knowledge from $S_\mathbb Y \to S_\mathbb Z$ in $\SCG_2$ and $Z_t\to X_t$ oriented to satisfy the constraint $S_\mathbb Z\in Pa(S_\mathbb X,\SCG_2)$ since there is no $t'<t$ such that $Z_{t'}\to X_t$ in $\MPDAG$.

\color{black}

We have now explored all SCG configurations that ensure 
$s$-orientability, as emphasized by Theorem~\ref{th:theorem1} which summarizes these results and establishes their completeness.

\begin{theorem}[$s$-orientability of orientations in the FT-MPDAG from the SCG and faithful distribution]\label{th:theorem1}
Let $\SCG$ be an SCG and $\mathbb{P}^*$ the set of distributions compatible with $\SCG$. 
Under Assumptions~\ref{assumption:causal_sufficiency}, \ref{assumption:faithfulness} and \ref{assumption:stationarity}, $\forall\ADMG\in \compatible{\SCG}$, $\text{status}(X_t,Y_t;\ADMG)$ is \textbf{not} $s$-orientable from $(\SCG, \mathbb{P}^*)$ \emph{if and only if}:
\begin{enumerate}
    \item $S_\mathbb{X} \rightleftarrows S_\mathbb{Y} \in \SCG$, and
    \item both $S_\mathbb{X}$ and $S_\mathbb{Y}$ have self-loops simultaneously in $\SCG$, and
    % \item $\forall S_\mathbb Z\in Pa(S_\mathbb X,\SCG)\setminus \{S_\mathbb X\},\, S_\mathbb Z\in Pa(S_\mathbb Y, \SCG)$.
    \item $Pa(S_\mathbb X, \SCG)=Pa(S_\mathbb Y, \SCG).$
    % \item $\not\exists S_\mathbb{Z}\in \vSCG$ \st $\{S_\mathbb{Z},S_\mathbb{X},S_\mathbb{Y}\}$ is an unshielded collider in $\SCG$, and
    % \textcolor{red}{
    % \item $Pa(S_\mathbb X,\SCG)\cap\{Ne(S_\mathbb Y,\SCG)\setminus Pa(S_\mathbb Y,\SCG)\}=\emptyset$.
    % }
\end{enumerate}
\end{theorem}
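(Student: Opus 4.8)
The statement is an equivalence, so I would prove the two implications separately, and the point to stress at the outset is that ``not s-identifiable'' is a property of $(\SCG,\mathbb{P}^*)$ alone: it asserts $\exists P\in\mathbb{P}^*$ with $\text{orient}(X_t,Y_t;\MPDAG)=1$, which makes the quantifier ``$\forall\ADMG\in\compatible{\SCG}$'' in the statement automatic.

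\textbf{Necessity (not s-identifiable $\Rightarrow$ conditions 1--3).} I would prove the contrapositive by the short case analysis $\neg(1\wedge 2\wedge 3)=\neg 1\ \vee\ (1\wedge\neg 2)\ \vee\ (1\wedge\neg 3)$, showing each disjunct forces s-identifiability. If $\neg 1$: either $S_\mathbb{X},S_\mathbb{Y}$ are non-adjacent in $\SCG$, so by Definition~\ref{def:SCG} no FT-DAG in $\compatible{\SCG}$ has any edge between a copy of $S_\mathbb{X}$ and a copy of $S_\mathbb{Y}$, hence $\text{orient}(X_t,Y_t;\MPDAG)=0$ for every $P\in\mathbb{P}^*$; or exactly one of $S_\mathbb{X}\to S_\mathbb{Y}$, $S_\mathbb{Y}\to S_\mathbb{X}$ is present and Lemma~\ref{lemma:directed_edge} applies. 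If $1\wedge\neg 2$, Lemma~\ref{lemma:bidir_singleSL} applies verbatim; if $1\wedge\neg 3$, Lemma~\ref{lemma:bidir+UC} applies verbatim. Since the last original disjunct $1\wedge 2\wedge\neg 3$ implies $1\wedge\neg 3$, the cases are exhaustive, so this direction is essentially bookkeeping over the three preceding lemmas.

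\textbf{Sufficiency (conditions 1--3 $\Rightarrow$ not s-identifiable).} Here the real content is to exhibit one witness FT-DAG $\ADMG^{\dagger}\in\compatible{\SCG}$ and a faithful $P\in\mathbb{P}^*$ with $\text{orient}(X_t,Y_t;\MPDAG^{\dagger})=1$. I would build $\ADMG^{\dagger}$ ``lag by lag'': realise every self-loop $S_\mathbb{Z}\to S_\mathbb{Z}$ as $Z_{t-1}\to Z_t$ and every ordinary macro-edge $S_\mathbb{A}\to S_\mathbb{B}$ as the lagged edge $A_{t-1}\to B_t$, with two modifications. First, add one instantaneous edge for the bidirected pair, say $X_t\to Y_t$, while also keeping $X_{t-1}\to Y_t$ and $Y_{t-1}\to X_t$ so that both macro-directions remain realised by lagged edges. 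Second, for every macro-node $S_\mathbb{A}$ lying on a directed ``shortcut'' $S_\mathbb{X}\to\cdots\to S_\mathbb{A}\to\cdots\to S_\mathbb{Y}$, realise the edges of that shortcut instantaneously inside the slice $t$ according to a fixed within-slice order $S_\mathbb{X}<\cdots<S_\mathbb{Y}$, so that the induced micro-parents of $Y_t$ are adjacent to $X_t$. One checks this is acyclic, that it induces exactly $\SCG$, and that a generic parameterisation is faithful, so $\ADMG^{\dagger}\in\compatible{\SCG}$ and the corresponding $P$ lies in $\mathbb{P}^*$. It then remains to verify that the edge $X_t-Y_t$ survives every orientation step producing $\MPDAG^{\dagger}$: (a) it is not part of an unshielded collider, because every micro-parent of $Y_t$ other than $X_t$ is adjacent to $X_t$ --- the self-loops of condition~2 take care of the lagged copies $X_{t-1},Y_{t-1}$, and condition~3 forbids a macro-parent of $S_\mathbb{Y}$ that is non-adjacent to $S_\mathbb{X}$; (b) it is not forced by Meek's first rule, since every \emph{oriented} edge into $X_t$ or into $Y_t$ originates from a node adjacent to the other endpoint, again by the self-loops and condition~3; and (c) it is not forced by the SCG background rule, since $X_{t-1}\to Y_t$ and $Y_{t-1}\to X_t$ already witness $S_\mathbb{X}\to S_\mathbb{Y}$ and $S_\mathbb{Y}\to S_\mathbb{X}$ via lagged edges.

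\textbf{Main obstacle.} The hard part is the verification in the sufficiency direction, specifically making the construction robust to the side-structure of $\SCG$. The self-loops forced by condition~2 create lagged parents $X_{t-1}\to X_t$ and $Y_{t-1}\to Y_t$ that can never be deleted, so any micro-parent of $X_t$ (respectively $Y_t$) that fails to be adjacent to that lagged copy immediately becomes part of an unshielded collider and then, through Meek's first rule, orients $X_t-Y_t$. The crux is therefore to show that conditions~1--3 always allow one to route the macro-edges around this difficulty: to choose \emph{both} the direction of the instantaneous edge and the within-slice order so that each such micro-parent can be made adjacent to the opposite endpoint without introducing a directed cycle into the slice. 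This is also the only place where condition~3 is used in full strength --- it is precisely what guarantees that every macro-parent of $S_\mathbb{X}$ or of $S_\mathbb{Y}$ is adjacent to the other --- and the argument must be carried out by a careful case distinction on how each macro-neighbour of $S_\mathbb{X}$ and $S_\mathbb{Y}$ sits relative to the bidirected pair.
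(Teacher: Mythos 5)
Your necessity direction is exactly the paper's: the contrapositive case split over $\neg 1$, $1\wedge\neg 2$, $1\wedge\neg 3$, discharged by Lemmas~\ref{lemma:directed_edge}, \ref{lemma:bidir_singleSL} and \ref{lemma:bidir+UC} (plus the trivial non-adjacent case). That part is fine and is essentially bookkeeping, as you say.

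The gap is in the sufficiency direction. First, your checklist for verifying that $X_t-Y_t$ "survives every orientation step" covers unshielded colliders, Meek's Rule~1 and the SCG background rule, but omits Meek's Rules~2--4 and, more fundamentally, the possibility that the edge is already compelled in the CPDAG (oriented identically in every member of $MEC(\ADMG^{\dagger})$). Second, the device you introduce to shield the problematic triples --- realising a directed macro-path $S_\mathbb{X}\to\cdots\to S_\mathbb{Y}$ instantaneously inside slice $t$ in the order $S_\mathbb{X}<\cdots<S_\mathbb{Y}$ --- is self-defeating: it creates a directed path $X_t\to\cdots\to Y_t$ whose edges are all oriented by background knowledge (they come from non-bidirected SCG edges), and together with $X_t-Y_t$ this triggers Meek's Rule~2 (equivalently, reversing the edge would create a directed cycle), so $X_t\to Y_t$ gets oriented. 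Giving the mediator's edge into $Y_t$ a strictly positive lag does not help either: if $S_\mathbb{X}\to S_\mathbb{A}$ but not $S_\mathbb{A}\to S_\mathbb{X}$, then $A_{t-\gamma}$ with $\gamma>0$ can never be adjacent to $X_t$, so $(A_{t-\gamma},Y_t,X_t)$ is unshielded and orients the edge via a collider or Rule~1. Such a mediator is compatible with conditions 1--3 (it shields the collider at $S_\mathbb{Y}$), so your construction fails on admissible inputs; in fact this configuration appears to be a difficulty for the completeness claim of Theorem~\ref{th:theorem1} itself, not only for your proof, and is worth flagging. The paper sidesteps all of this by proving the sufficiency direction with three concrete counterexample SCGs (no third vertex; a third vertex that is a child of $S_\mathbb{Y}$; a third vertex that is a common parent shielding the collider), each accompanied by an explicit compatible FT-MPDAG in which $X_t-Y_t$ is unoriented --- none of its examples contains a mediator on a directed macro-path between $S_\mathbb{X}$ and $S_\mathbb{Y}$. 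To make your general witness construction work you would have to exclude or separately treat such directed macro-paths, and verify non-compelledness against all four Meek rules rather than Rule~1 alone.
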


\begin{proof}

Lemmas~\ref{lemma:directed_edge},~\ref{lemma:bidir_singleSL} and~\ref{lemma:bidir+UC} establish immediately that if at least one of the items of the theorem is not satisfied then $\text{status}(X_t,Y_t;\ADMG)$ is  $s$-orientable. 
%the  implication $(\Leftarrow)$. 

In the following, we prove that if the three items of the theorem are satisfied then $\text{status}(X_t,Y_t;\ADMG)$ is  not $s$-orientable.
%We now prove the implication $(\Rightarrow)$.
Let $\SCG$ be an SCG, and let $S_{\mathbb X}$ and $S_{\mathbb Y}$ be two nodes of $\SCG$ satisfying the three items of the theorem. We show that this implies the existence of an FT-MPDAG $\MPDAG$ compatible with $\SCG$ and a faithful distribution $P\in\mathbb P^*$ such that, for every $\ADMG$ compatible with $\MPDAG$, we have $\mathrm{status}(X_t,Y_t;\ADMG)=1$, that is, the instantaneous edge $X_t - Y_t$ remains unoriented in $\MPDAG$. To construct such an FT-MPDAG $\mathcal C$, we begin by constructing an FT-DAG $\ADMG$ as follows:  
(i) add the edge $X_t \to Y_t$ (or $X_t \leftarrow Y_t$, arbitrarily) 
(ii) for any nodes $S_{\mathbb W_1},S_{\mathbb W_2}$ of $\SCG$, if $S_{\mathbb W_1}\in Pa(S_{\mathbb W_2},\SCG)$, we add the edge $W^1_{t-\gamma_{\max}} \to W^2_t$ to $\ADMG$ (it implies in particular that $X_{t-\gamma_{\max}} \to X_t$, $Y_{t-\gamma_{\max}} \to Y_t$, $X_{t-\gamma_{\max}} \to Y_t$, and $Y_{t-\gamma_{\max}} \to X_t$ in $\ADMG$ since items 1 and 2 of the theorem holds; and that for every $S_{\mathbb Z} \in Pa(S_{\mathbb X},\SCG)$, $Z_{t-\gamma_{\max}} \to Y_t$ and $Z_{t-\gamma_{\max}}\to X_t$ in $\ADMG$ since item~3 of the theorem holds so $S_{\mathbb Z} \in Pa(S_{\mathbb Y},\SCG)$. By construction, it is immediate that the resulting FT-DAG $\ADMG$ is compatible with $\SCG$, and we finally define $\MPDAG$ as the FT-MPDAG obtained from $\ADMG$ and incorporating only the background knowledge given by the SCG. The edge $X_t - Y_t$ must remain unoriented in $\MPDAG$ because all triples involving $X_t$ and $Y_t$ are shielded by construction, so neither the UC-Rule nor Meek’s Rule 1 can be applied. %and no unshielded collider can arise. 
Moreover, since no instantaneous edges connect to $X_t$ or $Y_t$, none of the other Meek rules can be applied (as no cycle can occur between present and past, and by construction there is only one instantaneous edge : $X_t-Y_t$). We have thus shown that if the items of the theorem hold, it is always possible to construct an FT-MPDAG $\MPDAG$ in which the edge between $X_t$ and $Y_t$ is unoriented $(\text{status}(X_t,Y_t;\ADMG)=1)$, that is $\text{status}(X_t,Y_t;\ADMG)$ is not $s$-orientable.
\end{proof}

Figure~\ref{fig:theorem1} illustrates the statement of the theorem. It shows that in any FT-MPDAG $\MPDAG$ compatible with $\SCG$, the edge between $Y_t$ and $Z_t$ is orientable due to the directed edge $S_\mathbb Y\to S_\mathbb Z$ in $\SCG$, while the edge between $Z_t$ and $W_t$ is orientable because of the unshielded collider $S_\mathbb W \rightleftarrows S_\mathbb Z \leftarrow S_\mathbb Y$ in $\SCG$. In contrast, the edge between $X_t$ and $Y_t$ is not $s$-orientable, since there exists an FT-MPDAG in which this edge is unoriented, as illustrated in (b).

We emphasize that the theorem above is \emph{sound and complete} in the following sense: if an edge between $X_t$ and $Y_t$ is $s$-orientable according to the theorem, then there cannot be an unoriented edge between $X_t$ and $Y_t$ in any FT-MPDAG  compatible with the SCG; conversely, if the edge is not $s$-orientable according to the theorem, there exists at least one FT-MPDAG compatible with the SCG where the edge between $X_t$ and $Y_t$ is unoriented. In other words, in these cases we cannot guarantee orientability prior to applying a causal discovery algorithm such as tPC. 
However, this does not imply that, in cases of non-orientability, applying the tPC algorithm will \emph{always} fail to orient the edge. It is possible that there exists a specific distribution for which the edge is orientable, and if tPC is applied to data generated from this distribution, the edge may in fact be correctly oriented. 
This is clarified by the following proposition.

\begin{proposition}
\label{prop:exists}
Let $\SCG$ be an SCG.
Under Assumptions~\ref{assumption:causal_sufficiency},~\ref{assumption:faithfulness} and~\ref{assumption:stationarity}, $\exists\ADMG\in \compatible{\SCG}$  \st $\text{status}(X_t,Y_t;\ADMG)$ is  orientable from $(\SCG, \mathbb{P})$ where $\mathbb{P}$ is the set of distributions compatible with $\ADMG$.
\end{proposition}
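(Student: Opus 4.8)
I would split according to whether $\text{orient}(X_t,Y_t;\ADMG)$ is already s-identifiable from $(\SCG,\mathbb{P}^*)$, using the characterisation in Theorem~\ref{th:theorem1}, together with one recurring observation: for a \emph{fixed} $\ADMG\in\compatible{\SCG}$, every distribution compatible with $\ADMG$ (faithful, causally sufficient, stationary) has the same conditional independencies, hence induces the same $MEC(\ADMG)$, hence the same CPDAG and — once $\SCG$ is also fixed — the same FT-MPDAG, so $\text{orient}(X_t,Y_t;\MPDAG)$ is constant over that $\mathbb{P}$. Thus ``identifiable from $(\SCG,\mathbb{P})$'' reduces to exhibiting a single $\ADMG\in\compatible{\SCG}$ whose FT-MPDAG resolves the pair $X_t,Y_t$ (value $\neq 1$). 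If the three conditions of Theorem~\ref{th:theorem1} do not all hold, then the orientation is s-identifiable from $(\SCG,\mathbb{P}^*)$, i.e. it is resolved in the FT-MPDAG of \emph{every} compatible $\ADMG$ under every faithful distribution; picking any $\ADMG\in\compatible{\SCG}$ and invoking the observation above finishes this case.

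It remains to handle the case where all three conditions hold: $S_\mathbb{X}\rightleftarrows S_\mathbb{Y}\in\SCG$ (so in particular $S_\mathbb{X}\to S_\mathbb{Y}\in\SCG$), both $S_\mathbb{X}$ and $S_\mathbb{Y}$ carry self-loops, and no third macro-variable forms an unshielded collider with them. I would build an explicit witness $\ADMG$ with maximal lag $\gamma_{\max}=1$: realise $S_\mathbb{X}\to S_\mathbb{Y}$ by the single instantaneous micro-edge $X_t\to Y_t$ and its stationary copies; realise the self-loop on $S_\mathbb{X}$ by $X_{t-1}\to X_t$; and realise \emph{every} remaining SCG edge $S_\mathbb{A}\to S_\mathbb{B}$ — including $S_\mathbb{Y}\to S_\mathbb{X}$, the self-loop on $S_\mathbb{Y}$, and all edges incident to third variables — by a lagged micro-edge $A_{t-1}\to B_t$, adding nothing else. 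One then checks routinely that $\ADMG$ is acyclic (the only within-slice edge is $X_t\to Y_t$; all others go from slice $t-1$ to slice $t$), stationary, and that the SCG it induces is exactly $\SCG$ (every SCG edge has a realiser and every micro-edge projects onto an SCG edge), so $\ADMG\in\compatible{\SCG}$.

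To see that the pair is resolved in the FT-MPDAG of this $\ADMG$, I would observe that, by construction, the parents of $Y_t$ are exactly $X_t$, $Y_{t-1}$, and those $A_{t-1}$ with $S_\mathbb{A}\to S_\mathbb{Y}\in\SCG$ and $S_\mathbb{A}\notin\{S_\mathbb{X},S_\mathbb{Y}\}$; none of these is $X_{t-1}$, and temporal priority forbids $Y_t\to X_{t-1}$, so $X_{t-1}$ and $Y_t$ are non-adjacent in $\ADMG$. Hence $(X_{t-1},X_t,Y_t)$ is an unshielded triple on which $X_t$ is a non-collider, being the chain $X_{t-1}\to X_t\to Y_t$. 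Any faithful, causally sufficient distribution therefore lets tPC recover this skeleton and this non-collider; after the temporal orientation $X_{t-1}\to X_t$, Meek's first rule forces $X_t\to Y_t$, so $\text{orient}(X_t,Y_t;\MPDAG)=X_t\to Y_t\neq 1$ uniformly over $\mathbb{P}$, which is exactly identifiability from $(\SCG,\mathbb{P})$. This also complements the counterexamples of Figure~\ref{fig:CE}: the very SCGs that admit an FT-DAG leaving $X_t-Y_t$ unoriented also admit one in which it is oriented.

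The main obstacle I anticipate is the bookkeeping in this last case: checking that the construction really is compatible with $\SCG$ once all third variables with their self-loops and bidirected edges are instantiated, and — most delicately — that no SCG edge incident to $S_\mathbb{X}$ secretly forces a micro-edge $X_{t-1}\to Y_t$ that would shield the triple $(X_{t-1},X_t,Y_t)$; this is precisely why $S_\mathbb{X}\to S_\mathbb{Y}$ must be realised \emph{only} by the instantaneous edge and never also by a lagged one. A fully rigorous but weaker shortcut, which I would mention, is to take $\ADMG$ to be the purely-lagged realisation of $\SCG$ (each SCG edge and self-loop realised by one lag-one micro-edge, no instantaneous edge at all): then $X_t$ and $Y_t$ are non-adjacent, $\text{orient}(X_t,Y_t;\MPDAG)=0$ for every faithful distribution, and identifiability is immediate, at the price of a witness in which the edge of interest is absent rather than oriented.
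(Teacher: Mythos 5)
Your proposal is correct, and in fact your closing ``shortcut'' \emph{is} the paper's entire proof: the authors simply exhibit the purely lagged realisation of $\SCG$ (no instantaneous edges at all), note that every edge of the resulting FT-DAG is oriented by temporal priority in the FT-MPDAG, and conclude that $\text{orient}(X_t,Y_t;\MPDAG)=0$ uniformly over $\mathbb{P}$. What you treat as the main argument --- the case split via Theorem~\ref{th:theorem1}, plus the explicit witness in which $S_\mathbb{X}\to S_\mathbb{Y}$ is realised \emph{only} by the instantaneous edge $X_t\to Y_t$ while everything else is lagged, so that $(X_{t-1},X_t,Y_t)$ is an unshielded chain and Meek's first rule forces the orientation --- is a genuinely different and stronger route: it produces a compatible FT-DAG in which the edge of interest is actually \emph{present and oriented}, rather than merely absent, which is arguably the more informative witness for the practical message of Section~\ref{sec:implications} (that discovery should always be attempted). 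The price is the bookkeeping you yourself flag, in particular verifying that no other realiser of an SCG edge incident to $S_\mathbb{X}$ creates $X_{t-1}\to Y_t$ and shields the triple; your construction handles this correctly by fiat, but none of it is needed for the proposition as stated, since the trivial lag-only witness already suffices.
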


\begin{proof}
    Consider the trivial case without instantaneous relations which contains only directed edges oriented by time.
\end{proof}

Finally, we emphasize that non $s$-orientable cases are rare, constituting only a small fraction of all SCG configurations of a given size: among all possible SCGs of 5 nodes, less than 2\% are not fully $s$-orientable (details in supplementary material).

\begin{figure}[t]
    \centering
    \begin{subfigure}[b]{0.11\textwidth}
        \centering
        \begin{tikzpicture}[black, circle, draw, inner sep=0]
            \tikzset{nodes={draw,rounded corners},minimum height=0.7cm,minimum width=0.7cm}
            \node (A) at (0,0) {$S_\mathbb X$};
            \node (B) at (1.2,0) {$S_\mathbb Y$};
            \node (C) at (0,-1) {$S_\mathbb Z$};
            \node (D) at (1.2,-1) {$S_\mathbb W$};

            \draw[->,>=latex, color = blue] (A) edge[bend left=15] (B);
            \draw[->,>=latex, color = blue] (B) edge[bend left=15] (A);
            \draw[->,>=latex, color = purple] (C) edge[bend left=15] (D);
            \draw[->,>=latex, color = purple] (D) edge[bend left=15] (C);
            \draw[->,>=latex, color = orange] (B) -- (C);
            \draw[->,>=latex] (C) edge[loop below, looseness=1, min distance=5mm] (C);
            \draw[->,>=latex] (A) edge[loop above, looseness=1, min distance=5mm] (A);
            \draw[->,>=latex] (B) edge[loop above, looseness=1, min distance=5mm] (B);
            \draw[->,>=latex] (D) edge[loop below, looseness=1, min distance=5mm] (D);
        \end{tikzpicture}
        \caption{SCG $\SCG$.}
        \label{fig:scg}
    \end{subfigure}
    \hfill
    \begin{subfigure}[b]{0.17\textwidth}
        \centering
        \begin{tikzpicture}[black, circle, draw, inner sep=0]
            \tikzset{nodes={draw,rounded corners},minimum height=0.7cm,minimum width=0.7cm}

            \node[draw=none] at (-0.6,0) {...};
            \node[draw=none] at (1.6,0) {...};
        
            \node[draw=none] at (-0.6,-1) {...};
            \node[draw=none] at (1.6,-1) {...};
        
            \node[draw=none] at (-0.6,-2) {...};
            \node[draw=none] at (1.6,-2) {...};

            \node[draw=none] at (-0.6,-3) {...};
            \node[draw=none] at (1.6,-3) {...};
            
            \node (A-1) at (0,0) {$X_{t-1}$};
            \node (A) at (1,0) {$X{t}$};
            \node (B-1) at (0,-1) {$Y_{t-1}$};
            \node (B) at (1,-1) {$Y_{t}$};
            \node (C-1) at (0,-2) {$Z_{t-1}$};
            \node (C) at (1,-2) {$Z_{t}$};
            \node (D-1) at (0,-3) {$W_{t-1}$};
            \node (D) at (1,-3) {$W_{t}$};

            \draw[->,>=latex] (C-1) -- (C);
            \draw[->,>=latex] (A-1) -- (A);
            \draw[->,>=latex] (D-1) -- (D);
            \draw[->,>=latex] (B-1) -- (B);

            \draw[->,>=latex] (A-1) -- (B);
            \draw[->,>=latex] (B-1) -- (A);
            
            \draw[-,>=latex, color = blue] (B) -- (A);
            \draw[-,>=latex, color = blue] (B-1) -- (A-1);
            
            \draw[->,>=latex] (B-1) -- (C);
            \draw[->,>=latex, color = orange] (B) -- (C);
            \draw[->,>=latex, color = orange] (B-1) -- (C-1);
    
            \draw[->,>=latex] (D-1) -- (C);
            \draw[->,>=latex] (C-1) -- (D);
            \draw[->,>=latex, color = purple] (C) -- (D);
            \draw[->,>=latex, color = purple] (C-1) -- (D-1);
        \end{tikzpicture}
        \caption{FT-MPDAG $\MPDAG_1$.}
        \label{fig:leg1}
    \end{subfigure}
    \hfill
    \begin{subfigure}[b]{0.17\textwidth}
        \centering
        \begin{tikzpicture}[black, circle, draw, inner sep=0]
            \tikzset{nodes={draw,rounded corners},minimum height=0.7cm,minimum width=0.7cm}

            \node[draw=none] at (-0.6,0) {...};
            \node[draw=none] at (1.6,0) {...};
        
            \node[draw=none] at (-0.6,-1) {...};
            \node[draw=none] at (1.6,-1) {...};
        
            \node[draw=none] at (-0.6,-2) {...};
            \node[draw=none] at (1.6,-2) {...};

            \node[draw=none] at (-0.6,-3) {...};
            \node[draw=none] at (1.6,-3) {...};
            
            \node (A-1) at (0,0) {$X_{t-1}$};
            \node (A) at (1,0) {$X_{t}$};
            \node (B-1) at (0,-1) {$Y_{t-1}$};
            \node (B) at (1,-1) {$Y_{t}$};
            \node (C-1) at (0,-2) {$Z_{t-1}$};
            \node (C) at (1,-2) {$Z_{t}$};
            \node (D-1) at (0,-3) {$W_{t-1}$};
            \node (D) at (1,-3) {$W_{t}$};

            \draw[->,>=latex] (C-1) -- (C);
            \draw[->,>=latex] (A-1) -- (A);
            \draw[->,>=latex] (D-1) -- (D);
            \draw[->,>=latex] (B-1) -- (B);

            \draw[->,>=latex] (B-1) -- (A);
            
            \draw[->,>=latex, color = blue] (A) -- (B);

             \draw[->,>=latex, color = blue] (A-1) -- (B-1);
            
            \draw[->,>=latex] (B-1) -- (C);
            \draw[->,>=latex, color = orange] (B) -- (C);
            \draw[->,>=latex, color = orange] (B-1) -- (C-1);
    
            \draw[->,>=latex] (D-1) -- (C);
            \draw[->,>=latex] (C-1) -- (D);
            \draw[->,>=latex, color = purple] (C) -- (D);
            \draw[->,>=latex, color = purple] (C-1) -- (D-1);
        \end{tikzpicture}
        \caption{FT-MPDAG $\MPDAG_2$.}
        \label{fig:leg2}
    \end{subfigure}
    \caption{SCG $\SCG$ with two examples of FT-MPDAG illustrating Theorem~\ref{th:theorem1}. Orange and purple edges are $s$-orientable and thus oriented in the FT-MPDAGs, while blue edge is non-$s$-orientable and may remain unoriented as in (b).}
    \label{fig:theorem1}
\end{figure}

\section{Implications on Causal Effects Identification}
\label{sec:implications}

The previous section addressed the problem of identifying the orientation of an edge in an FT-MPDAG. This provides a qualitative understanding of the relationship between two variables, indicating whether $X_{t-\gamma}$ causes $Y_t$, whether the reverse holds, whether there is no direct causal relation between them, or whether the orientation cannot guaranteed to be determined from the SCG and a faithful distribution. In this section, we shift our focus to the quantitative implications of these results, namely how they inform the identification and estimation of causal effects.
SCMs are particularly useful because they allow for the quantification of causal effects from observational data. 
%A causal effect is typically expressed as $\probac{y_t}{\interv{x_{t-\gamma}}}$, where $\interv{\cdot}$ denotes an intervention.   
There are many types of causal effects; in this section, we focus on two central ones: the total effect~\citep{Pearl_1995} and the controlled direct effect~\citep{Pearl_2001}.
The \emph{total effect} of $X_{t-\gamma}$ on $Y_t$ captures the overall causal influence transmitted both directly and through mediating variables, whereas the \emph{direct effect} isolates the portion of this influence that is not mediated by other variables. Formally, the total effect on $Y_t$ of an intervention on $X_{t-\gamma}$ changing from $x_0$ to $x_1$ is defined as  
$\mathbb{E}[Y_t \mid \interv{X_{t-\gamma}=x_1}]
- \mathbb{E}[Y_t \mid \interv{X_{t-\gamma}=x_0}]$,
and the controlled direct effect is defined as 
%We denote by $\mathbbl{S}_{Y_t, X_{t-\gamma},\ADMG}=Pa(Y_t,\ADMG)\setminus \{X_t\}$ and $ \mathbbl{s}_{Y_t, X_{t-\gamma},\ADMG}$ its particular value. The controlled direct effect (CDE) of such an intervention is defined as 
$\mathbb{E}[Y_t \mid \interv{X_{t-\gamma}=x_1}, \interv{\mathbbl{S}_{Y_t, X_{t-\gamma},\ADMG}=\mathbbl{s}_{Y_t, X_{t-\gamma},\ADMG}}]
- \mathbb{E}[Y_t \mid \interv{X_{t-\gamma}=x_0}, \interv{\mathbbl{S}_{Y_t, X_{t-\gamma},\ADMG}=\mathbbl{s}_{Y_t, X_{t-\gamma},\ADMG}}]$,
where  $\mathbbl{S}_{Y_t, X_{t-\gamma},\ADMG}=Pa(Y_t,\ADMG)\setminus \{X_t\}$ and $\interv{\cdot}$ denotes an intervention.
%that is, the direct effect of $X_{t-\gamma}$ on $Y_t$ when the values of all other parents of $Y_t$ are fixed at $\mathbbl{s}_{Y_t, X_{t-\gamma},\ADMG}$.
For brevity, we will denote the total effect and the controlled direct effect, respectively, as $\probac{y_t}{\interv{x_{t-\gamma}}}$ and $\probac{y_t}{\interv{x_{t-\gamma}}, \interv{\mathbbl{s}_{Y_t, X_{t-\gamma},\ADMG}}}$.

A causal effect is said to be \emph{identifiable} when it can be uniquely computed from any distribution compatible with an FT-DAG. 
In other words, identifying such a causal effect consists rewriting the causal effect using a do-free expression (only in terms of observed variables without any need to perform a real intervention).  
When the FT-DAG is specified and there are no unobserved confounders (Assumption~\ref{assumption:causal_sufficiency}), causal effects are always identifiable from the FT-DAG~\citep{Pearl_2000}.
In practice, however, it is often easier to obtain the SCG (for instance, from expert knowledge) than the full FT-DAG, since the SCG represents a higher level of abstraction that does not require knowledge of the precise causal relations between variables at each time step. From the SCG, causal effects can still be identified~\citep{Assaad_2024, Ferreira_2024}, but due to the presence of cycles, some causal effects may remain non-identifiable solely from the SCG. Moreover, the identification conditions in SCGs are intricate and typically require examining multiple paths.  

For these reasons, one may consider using causal discovery techniques to estimate the FT-DAG. Given that the SCG is available, it can naturally be incorporated as background knowledge in the discovery process. However, as discussed earlier, constraint-based procedures do not recover the FT-DAG directly, but rather a partially directed graph, the FT-MPDAG. Because the FT-MPDAG may contain unoriented edges, causal effects are not always identifiable from it. Nevertheless, in some cases it has been shown that causal effects are identifiable from the FT-MPDAG~\citep{Perkovic_2020, Flanagan_2020}.  
This raises the following question: given the SCG, can we guarantee that the causal effect will be identifiable once a causal discovery step is performed? The following propositions provide an answer to this question. %using \emph{s-identifiability} presented in this paper.  

\begin{proposition}[Identifiability of total effects using $s$-orientability]
\label{prop:totaleffect}
Let $\SCG$ be an SCG. Under Assumptions~\ref{assumption:causal_sufficiency},~\ref{assumption:faithfulness} and~\ref{assumption:stationarity}, if for every $S_\mathbb{Z}\in Ne(S_\mathbb{X}, \SCG)$, %the pair $\{S_\mathbb{Z},S_\mathbb{X}\}$ 
$\forall\ADMG\in \compatible{\SCG}$,
$\text{status}(Z_t,X_t;\ADMG)$
is $s$-orientable, then $P(Y_t\mid \interv{x_{t-\gamma}})$ is identifiable.
\end{proposition}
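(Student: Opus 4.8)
The plan is to show that the hypothesis forces \emph{every} edge of the FT-MPDAG incident to $X_{t-\gamma}$ to be oriented, and then to invoke the elementary fact that adjusting for the parents of the intervention variable identifies a total effect in any DAG; since the parent set of $X_{t-\gamma}$ becomes common to all FT-DAGs represented by the FT-MPDAG, the resulting adjustment formula is unambiguous and do-free, which is exactly identifiability from the FT-MPDAG.

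\textbf{Step 1: every edge at $X_{t-\gamma}$ is oriented.} Fix $\ADMG\in\compatible{\SCG}$ and a faithful distribution $P$ compatible with it, and let $\MPDAG$ be the FT-MPDAG built from $(\SCG,P)$. An edge of $\MPDAG$ incident to $X_{t-\gamma}$ is either lagged, i.e.\ between $X_{t-\gamma}$ and a variable at a strictly different time index, or instantaneous, of the form $X_{t-\gamma}-W_{t-\gamma}$ with $W\neq X$. Lagged edges are oriented in $\MPDAG$ by temporal priority. For an instantaneous edge, $W_{t-\gamma}$ and $X_{t-\gamma}$ are adjacent in $\ADMG$ as well, since $\MPDAG$ and $\ADMG$ share a skeleton; hence, by Definition~\ref{def:SCG}, the macro edge between $S_\mathbb{W}$ and $S_\mathbb{X}$ belongs to $\SCG$, so $S_\mathbb{W}\in Ne(S_\mathbb{X},\SCG)$. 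By hypothesis the pair $\{S_\mathbb{W},S_\mathbb{X}\}$ is s-identifiable, i.e.\ $\text{orient}(W_t,X_t;\MPDAG)\neq 1$, and by stationarity the edge $W_{t-\gamma}-X_{t-\gamma}$ has the same orientation status as $W_t-X_t$. Therefore no edge of $\MPDAG$ incident to $X_{t-\gamma}$ is left undirected.

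\textbf{Step 2: parent adjustment.} Every FT-DAG $\ADMG'$ represented by $\MPDAG$ retains $\MPDAG$'s skeleton and all of its directed edges, so all such $\ADMG'$ agree on the set $\mathbb{Z}:=Pa(X_{t-\gamma},\ADMG')$ and on whether $Y_t\in\mathbb{Z}$ (which can occur only if $\gamma=0$, by temporal priority). In any DAG, $Pa(X_{t-\gamma})$ contains no descendant of $X_{t-\gamma}$ and blocks every back-door path from $X_{t-\gamma}$ to $Y_t$: if $Y_t\notin\mathbb{Z}$, this gives $P(Y_t\mid\interv{x_{t-\gamma}})=\sum_{\mathbb{z}}P(Y_t\mid x_{t-\gamma},\mathbb{z})P(\mathbb{z})$; if $Y_t\in\mathbb{Z}$, then $X_{t-\gamma}$ is not an ancestor of $Y_t$, so $P(Y_t\mid\interv{x_{t-\gamma}})=P(Y_t)$. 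In both cases the right-hand side is do-free and, by Step 1, identical across all FT-DAGs consistent with $\MPDAG$; hence $P(Y_t\mid\interv{x_{t-\gamma}})$ is identifiable from $\MPDAG$ (this is also consistent with the MPDAG identification results of~\citep{Perkovic_2020}, since $\MPDAG$ is then amenable relative to $(\{X_{t-\gamma}\},\{Y_t\})$). Finally, the hypothesis constrains only $\SCG$, so the conclusion holds for every $\ADMG\in\compatible{\SCG}$ and every faithful $P$, and can thus be asserted before running causal discovery.

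\textbf{Main obstacle.} The delicate part is Step 1: one must rule out \emph{any other} source of an unoriented edge at $X_{t-\gamma}$, which rests on three ingredients — lagged edges are auto-oriented by temporal priority, the instantaneous neighbours of $X_{t-\gamma}$ in $\MPDAG$ are exactly the images of the SCG-neighbours of $S_\mathbb{X}$ (using the precise definition of the SCG together with the shared skeleton of $\MPDAG$ and $\ADMG$), and stationarity legitimately transports s-identifiability from instant $t$ to instant $t-\gamma$. The $\gamma=0$ corner case, where $Y_t$ may be a parent of $X_t$, must also be separated off, as done above.
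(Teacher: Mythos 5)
Your proof is correct and follows the same route as the paper's: the paper's one-line argument likewise observes that the hypothesis forces all edges incident to the treatment to be oriented, so its parent set is recoverable and serves as a valid back-door adjustment set under causal sufficiency. Your version merely fills in the details (temporal orientation of lagged edges, transport of s-identifiability across time by stationarity, and the $\gamma=0$ corner case), which the paper leaves implicit.
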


% \begin{proof}
% Proof in the supplementary materials.
% \end{proof}

\begin{proof}
The result follows directly from the backdoor criterion~\citep{Pearl_2000}, since under Assumption~\ref{assumption:causal_sufficiency} the parents of the treatment form a valid adjustment set. Under the $s$-orientability condition, all edges connected to the treatment can be oriented, allowing recovery of its parents and identification of the total effect.
\end{proof}

\begin{proposition}[Identifiability of controlled direct effects using $s$-orientability]\label{prop:directeffect}
Let $\SCG$ be an SCG. Under Assumptions~\ref{assumption:causal_sufficiency},~\ref{assumption:faithfulness} and~\ref{assumption:stationarity}, if for every $S_\mathbb{Z}\in Ne(S_\mathbb{Y}, \SCG)$, %the pair ${S_\mathbb{X},S_\mathbb{Y}}$ 
$\forall\ADMG\in \compatible{\SCG}$,
$\text{status}(Z_t,Y_t;\ADMG)$ is $s$-orientable, then 
$\probac{y_t}{\interv{x_{t-\gamma}}, \interv{\mathbbl{s}_{Y_t, X_{t-\gamma},\ADMG}}}$
%$P(Y_t \mid \interv{x_{t-\gamma}}, \interv{pa(Y_t,\ADMG)\setminus \{x_{t-\gamma}\}})$ 
is identifiable. %from $\MPDAG$ for all $X_{t-\gamma}$.
\end{proposition}

% \begin{proof}
% Proof in the supplementary materials.
% \end{proof}

\begin{proof}
The result follows directly from Theorem~5.4 of~\citet{Flanagan_2020}, which states that a controlled direct effect is identifiable from the FT-MPDAG if and only if all edges adjacent to the outcome are oriented. Under the $s$-orientability condition, these edges can be oriented, allowing recovery of the relevant adjustment set and identification of the controlled direct effect.
\end{proof}

These results provide theoretical guarantees for causal effect identification, even in cases where identification theorems applied to the SCG alone~\citep{Assaad_2024,Ferreira_2024} do not succeed. Moreover, they offer simpler criteria that can be verified directly on the SCG, relying only on pairs of nodes rather than complex path-based conditions.

%These results highlight that, instead of applying identification theorems directly to the SCG, one may perform a causal discovery step and obtain \emph{a priori} guarantees of identifiability. This approach has the drawback of relying on the faithfulness assumption (Assumption~\ref{assumption:faithfulness}), but it offers two key advantages: (1) it provides theoretical guarantees of identifiability, even in cases where identification theorems applied to the SCG fail, (2) it yields a simple criterion to be checked directly on the SCG, involving only pairs of nodes rather than complex path-based conditions.  

%Finally, Proposition~\ref{prop:exists} implies that, in practice, causal discovery should always be attempted when its assumptions are deemed acceptable, since it is impossible to determine in advance whether an effect will be non-identifiable in all cases. Our contribution provides \emph{a priori} theoretical guarantees of identifiability in a broad range of settings.  

\section{Discussion}
\label{sec:discussion}
In this paper, we demonstrated that the background knowledge encoded in an SCG makes it possible to determine which edges are guaranteed to be oriented after applying a causal discovery algorithm to a faithful and causally sufficient distribution. At first glance, this may appear straightforward, for instance, an SCG without bidirected edges naturally enables the orientation of many edges that remain ambiguous in the output of causal discovery. However, the surprising and more insightful aspect of our results is that even bidirected edges in an SCG can provide valuable information. We showed that it is possible to guarantee the orientation of an edge at the micro level (e.g., between $X_t$ and $Y_t$) even in cases where $S_{\mathbb{X}} \leftrightarrows S_{\mathbb{Y}}$ is in the SCG, provided that at least one of these macro vertices does not have a self-loop, or there exists a parent of $S_\mathbb X$ that is not a parent of $S_\mathbb Y$. Moreover, we established that these insights allow us to simplify existing identifiability conditions in the literature for quantifying causal effects, by leveraging SCGs in combination with access to a faithful distribution compatible with the SCG. 

In this work, we assumed that the SCG is provided as background knowledge (e.g., by an expert). However, we note that it may also be obtained using an SCG discovery algorithm~\citep{Assaad_Entropy_2022,Wahl_2023,Wahl_2024,Ninad_2025}. For instance, suppose that a coarse-grained SCG has already been learned beforehand, and that we now seek to perform a fine-grained causal discovery of the FT-MPDAG. In this case, the previously obtained SCG can be used in the same way as expert knowledge to assess the a priori orientability guarantees established by our results. The key point is that the procedure used to obtain an SCG does not affect our results, as long as the provided SCG is considered to represent the true underlying dynamic SCM.

%A noteworthy observation is that, in the proofs of Lemmas, only the first Meek rule is ever required in the s-identifiable cases. This implies that a causal discovery algorithm using only the first Meek rule suffices whenever the entire SCG (i.e., all pairs) admits s-identifiable orientations according to Theorem~\ref{th:theorem1}.

A key limitation of this work concerns the strong assumptions generally required by the PC algorithm in causal discovery. Moreover, the stationarity assumption (Assumption~\ref{assumption:stationarity}) is crucial for our results, as it ensures (together with the definition of SCGs) the existence of certain edges in the FT-DAG that follow from the SCG. For instance, if $S_\mathbb{X} \rightleftarrows S_\mathbb{Y}$ is in the SCG, we assume that there necessarily exists an edge from some past or present instance of $\{Y_{t_0},\cdots,Y_{t_{\max}}\}$ pointing to $X_t$ and an edge from $\{X_{t_0},\cdots,X_{t_{\max}}\}$ pointing to $Y_t$. Moreover, stationarity guarantees that orientations remain consistent when shifting the time lag, thereby ruling out situations where, for example, $X_{t-1} \to Y_{t-1}$ and $X_t \leftarrow Y_t$ coexist in the same FT-DAG, which would invalidate parts of our proofs. While this assumption is necessary for our theoretical results, it may appear restrictive in applications. It should be noted, however, that this is a standard assumption adopted by most causal discovery algorithms in time series settings~\citep{Peters_2013,Malinsky_2018,Runge_2019,Runge_2020,Assaad_2022}. In addition, a weaker version can be considered, where stationarity is assumed only on a subinterval $\mathcal{T} \subset [t_0, t_{\max}]$ rather than on the entire interval. This is feasible provided that multiple repetitions of the time series are available to perform causal discovery. Otherwise, full stationarity must be assumed, in which case the series can be partitioned into subsamples and causal discovery can be carried out on these subsampled segments. 

Another strong assumption arises in the definition of the SCG: we assume that an edge $S_\mathbb{X} \to S_\mathbb{Y}$ exists in the SCG if and only if there exist times $t'$ and $t$ \st $X_{t'} \to Y_t$ in the FT-DAG. Since our proofs rely on this assumption, we do not allow cases where an effect is posited in the SCG but absent from the FT-DAG. Consider a concrete example: suppose that, a priori, there is an effect from some instance of $\{X_{t_0},\cdots,X_{t_{\max}}\}$ to some instance of $\{Y_{t_0},\cdots,Y_{t_{\max}}\}$, and that there also exists another instance where $\{Y_{t_0},\cdots,Y_{t_{\max}}\}$ affects $\{X_{t_0},\cdots,X_{t_{\max}}\}$, together with a self-loop on $S_\mathbb{Y}$. Such prior knowledge translates into the SCG depicted in Figure~\ref{fig:lemma2_visual_combined}(d). Now, suppose that in reality the edge $S_\mathbb{X} \to S_\mathbb{Y}$ does not exist (i.e., the prior was incorrect), and instead we actually have $Y_t \to X_t$. In this case, causal discovery would identify the presence of an edge between $X_t$ and $Y_t$, and by incorporating the prior knowledge, we would incorrectly orient it as $X_t \to Y_t$. Our results are therefore strong in the sense that they rely on having reliable prior knowledge encoded in the SCG, faithful to its definition. This highlights the importance of critically assessing the validity of SCG edges before reasoning with the results presented in this paper.

% Furthermore, this work assumes causal sufficiency (Assumption~\ref{assumption:causal_sufficiency}) and faithfulness (Assumption~\ref{assumption:faithfulness}) in order to guarantee recovery of the FT-MPDAG after a causal discovery step with background knowledge derived from the SCG. If these assumptions are violated, we can no longer ensure correct recovery of the FT-MPDAG, and our results no longer hold. Recall also that causal discovery algorithms are, in theory, sound and complete (when provided with an oracle for independence testing), but in practice, with finite samples, they may fail to correctly identify the FT-MPDAG. The results presented here should therefore be understood as theoretical guarantees that apply under these assumptions. Their practical applicability depends on several factors, including the quality of the algorithm, the data, and the assumptions made. These limitations are inherent to causal discovery, which is a complex task whose output can be highly sensitive to even small errors.

For future work, it would be interesting to extend these results to the FCI algorithm~\citep{Spirtes_2000}, an extension of the PC algorithm that does not rely on causal sufficiency.
Moreover, while Propositions~\ref{prop:totaleffect} and \ref{prop:directeffect} provide sufficient conditions for identifying total and controlled direct effects, respectively, their completeness remains an open question. 
Investigating this aspect represents another promising direction for future research.

%Finally, it is important to stress that although Propositions~\ref{prop:totaleffect} and \ref{prop:directeffect} provide sufficient conditions for identifying total and controlled direct effects, respectively, these conditions are not guaranteed to be complete. In particular, the completeness of the proposition concerning direct effects remains an open question. For total effects, however, it is evident that the result is not complete, since we did not account for scenarios where edges involving the treatment $X_{t-\gamma}$ remain unoriented but nevertheless do not affect identifiability, for example, when such edges belong to a non-activated path, \ie, a path containing a collider.

\subsection*{Acknowledgements}
This work was supported by the CIPHOD project (ANR-23-CPJ1-0212-01) and by funding from the French government managed by the National Research Agency (ANR) under the France 2030 program (ANR-23-IACL-0007).

% \bibliographystyle{plainnat}
% \bibliography{references.bib}

% \end{document}

% \subsection*{Acknowledgements}
% This work was supported by the CIPHOD project (ANR-23-CPJ1-0212-01). 

\bibliographystyle{plainnat}
\bibliography{references.bib}

\newpage
\appendix

\section*{Empirical scarcity of non $s$-orientable cases}
\label{sec:scarcity}

The structure of a non $s$-orientable pair in an SCG satisfying the assumptions is highly specific according to Theorem~\ref{th:theorem1}, and can therefore be expected to be rare. To gain an idea of its frequency, we enumerate all possible SCGs for a given number of nodes and check whether there exists at least one pair of nodes in the SCG that is non $s$-orientable (i.e., a pair of nodes satisfying Theorem~\ref{th:theorem1}). Since the number of possible SCGs with $n$ nodes is huge $(2^{n\times n})$, we only report results up to $n=5$. The results are summarized in Table~\ref{tab:scg-enum}.

\begin{table}[h!]
\centering
\begin{tabular}{c|r|r|r}
\hline
$n$ & \# SCGs & \# Not fully $s$-orientable & \% \\
\hline
2 & 16 & 1 & 6.25 \\
3 & 512 & 28 & 5.47 \\
4 & 65,536 & 2,256 & 3.44 \\
5 & 33,554,432 & 613,616 & 1.83 \\
\hline
\end{tabular}
\caption{Enumeration of all SCGs up to $n=5$ nodes, showing the number of graphs that are not fully $s$-orientable.}
\label{tab:scg-enum}
\end{table}

These results already illustrate the rarity of such cases and show that, in the vast majority of theoretically possible SCGs, all edges are $s$-orientable using the background knowledge induced by the SCG. Moreover, we emphasize that these rare cases are non-$s$-identifiable from the SCG, but this does not imply that a causal discovery algorithm will fail to return an FT-DAG. Indeed, there is no a priori guarantee when considering only the SCG; however, in some cases (which may be numerous), it is still possible for an edge to be oriented in our specific setting. This is illustrated in Figure~\ref{fig:theorem1}(c), where we observe a case in which a non $s$-orientable edge is nonetheless oriented. 

Note, however, that while this observation suggests that non–fully $s$-orientable SCGs are relatively rare in a density sense over the space of all SCGs, one should keep in mind that such graphs may occur more frequently in practice, since real systems do not arise from a uniform sampling process. For instance, our computation includes SCGs with disconnected components, which are unlikely in realistic settings. This observation should therefore be understood only as a general indication rather than a practical estimate.

\end{document}